\documentclass{article}

\usepackage{microtype}
\usepackage{graphicx}
\usepackage{subcaption}
\usepackage{amsmath}
\usepackage{BOONDOX-uprscr}
\usepackage{booktabs} 

\usepackage{hyperref}
\usepackage{arydshln}
\usepackage{times}
\usepackage{latexsym}
\usepackage[T1]{fontenc}
\usepackage[utf8]{inputenc}
\usepackage{microtype}
\usepackage{inconsolata}
\usepackage{graphicx}
\usepackage{mathtools}
\usepackage{amsthm}
\usepackage{amssymb}
\usepackage{pifont}

\usepackage{arydshln}
\usepackage{listings}
\usepackage{multirow}
\usepackage{multicol}
\usepackage{color, colortbl}
\usepackage{tcolorbox}
\usepackage{xcolor}
\usepackage{tabularx}
\usepackage{enumitem}
\usepackage{etoolbox,lipsum}
\usepackage{subcaption}
\usepackage{stfloats}
\usepackage{musixtex}
\usepackage{multirow}
\usepackage{makecell}
\usepackage{arydshln}

\newcommand{\argmin}{\mathop{\mathrm{arg\,min}}}

\usepackage[accepted]{icml2026}

\usepackage{amssymb}
\usepackage{mathtools}
\usepackage{amsthm}

\usepackage[capitalize,noabbrev]{cleveref}

\newcommand{\kCov}{C}
\newcommand{\RETURN}{\STATE \textbf{return} }
\renewcommand{\algorithmicrequire}{\textbf{Input:}}
\renewcommand{\algorithmicensure}{\textbf{Output:}}
\newcommand{\algcomment}[1]{\hfill $\triangleright$ \textit{#1}}

\newcolumntype{C}{>{\centering\arraybackslash}X}

\theoremstyle{plain}
\newtheorem{theorem}{Theorem}[section]

\theoremstyle{definition}

\theoremstyle{remark}

\usepackage[textsize=tiny]{todonotes}

\icmltitlerunning{Swift-SVD: Theoretical Optimality Meets Practical Efficiency in Low-Rank LLM Compression}

\IfFileExists{main_backup.aux}{
  \message{We saw a default backup.aux file, let's use it instead of the main aux file.}
  \nofiles 
  \makeatletter
  \input{main_backup.aux}
  \makeatother
}{}

\begin{document}

\twocolumn[
  \icmltitle{Swift-SVD: Theoretical Optimality Meets Practical Efficiency \\ in Low-Rank LLM Compression}
  
  \icmlsetsymbol{equal}{*}
  \icmlsetsymbol{notice}{$\dagger$}
  \icmlsetsymbol{lead}{$\ddagger$}
  
    \begin{icmlauthorlist}
        \icmlauthor{Ruoling Qi}{equal,teleai,sjtu}
        \icmlauthor{Yirui Liu}{equal,lead,teleai}
        \icmlauthor{Xuaner Wu}{teleai}
        \icmlauthor{Xiangyu Wang}{teleai}
        \icmlauthor{Ming Li}{umd}
        \icmlauthor{Chen Chen}{sjtu}
        \\
        \icmlauthor{Jian Chen}{notice,ub,dolby}
        \icmlauthor{Yin Chen}{teleai}
        \icmlauthor{Qizhen Weng}{teleai}
    \end{icmlauthorlist}
    
    \icmlaffiliation{ub}{University at Buffalo}
    \icmlaffiliation{teleai}{Institute of Artificial Intelligence (TeleAI), China Telecom}
    \icmlaffiliation{dolby}{Dolby Laboratories}
    \icmlaffiliation{umd}{University of Maryland}
    \icmlaffiliation{sjtu}{Shanghai Jiao Tong University}
    
    \icmlcorrespondingauthor{Jian Chen}{Jian.Chen@dolby.com}
    \icmlcorrespondingauthor{Qizhen Weng}{wengqzh@chinatelecom.cn}
    \icmlkeywords{Machine Learning, ICML}

  \vskip 0.3in
]

\printAffiliationsAndNotice{\icmlEqualContribution $^{\dagger}$Work done when Jian was at University at Buffalo. $^{\ddagger}$Project lead; <yiruiliu926@gmail.com>}
\begin{abstract}

The deployment of Large Language Models is constrained by the memory and bandwidth demands of static weights and dynamic Key-Value  cache. SVD-based compression provides a hardware-friendly solution to reduce these costs. However, existing methods suffer from two key limitations: some are suboptimal in reconstruction error, while others are theoretically optimal but practically inefficient. In this paper, we propose Swift-SVD, an activation-aware, closed-form compression framework that simultaneously guarantees theoretical optimum, practical efficiency and numerical stability. Swift-SVD incrementally aggregates covariance of output activations given a batch of inputs and performs a single eigenvalue decomposition after aggregation, enabling training-free, fast, and optimal layer-wise low-rank approximation. We employ effective rank to analyze local layer-wise compressibility and design a dynamic rank allocation strategy that jointly accounts for local reconstruction loss and end-to-end layer importance. Extensive experiments across six LLMs and eight datasets demonstrate that Swift-SVD outperforms state-of-the-art baselines, achieving optimal compression accuracy while delivering 3–70$\times$ speedups in end-to-end compression time. Our code is available at \url{https://github.com/hiahei/Swift-SVD}.

\end{abstract}
\section{Introduction}

The deployment of large language models (LLMs) is constrained by memory resource requirements during inference. This pressure arises from two sources. First, modern LLMs contain a massive number of parameters that must reside in memory. Second, auto-regressive decoding maintains cached Key–Value (KV) states~\citep{ott2019fairseq}, introducing additional runtime memory. Unlike model parameters, these cached representations grow with sequence length, creating a distinct challenge as memory usage and data movement accumulate over time. While hardware advances can help mitigate memory bandwidth limitations~\citep{rhee2025hpu}, its cost and deployment complexity make algorithmic compression approaches an attractive alternative~\citep{shi2024keep}.

Among algorithmic solutions, post-training compression provides a practical way to reduce resource usage without retraining large models. Existing approaches include quantization~\cite{zhou2024survey} and pruning~\cite{guo2025slimllm, ashkboos2024slicegptcompresslargelanguage}, which lower numerical precision and remove parameters, respectively. In contrast, low-rank compression~\citep{ji2025towards, chang2024palu} reduces the intrinsic dimensionality of linear layers and can be viewed as a matrix approximation problem that seeks a lower-dimensional projection minimizing reconstruction error under a given objective. This formulation preserves dense operators, maintains compatibility with existing hardware and software stacks, and remains orthogonal to quantization and pruning approaches. 

Low-rank compression typically relies on SVD to obtain optimal projections under standard matrix approximation objectives. Early methods directly approximate projection weights for keys and values without explicitly minimizing reconstruction error over data-dependent activations~\cite{chang2024palu}, which limits their effectiveness under real input distributions. More recent approaches incorporate data dependence (or activation awareness), but often require Cholesky decomposition and/or multiple SVD computations, introducing numerical instability~\cite{wang2025svdllm, meyer2023matrix, chen2021drone} and reducing efficiency when scaling to large datasets~\cite{qinsi2025dobisvd}. Non-uniform compression across layers has also been explored~\cite{wang2025svd, qinsi2025dobisvd}; however, the lack of efficient layer-wise loss estimation hinders exhaustive searching for optimal rank allocation. As a result, heuristic strategies are used, which can lead to suboptimal performance that is sometimes worse than uniform rank allocation.

To address these challenges, we propose Swift-SVD, 
an activation-aware, training-free low-rank compression framework that 
\emph{jointly reduces the memory footprint of static model weights and the KV cache}, as shown in Figure~\ref{fig:kvcache_reduction}. Swift-SVD provides a direct spectral solution that avoids repeated SVD operations. By performing a single eigenvalue decomposition, it obtains the optimal low-rank projection in closed form, achieving low memory overhead, high efficiency, and strong numerical stability independent of dataset scale or sequence length. The resulting spectral representation also enables fast layer-wise compression loss computation, making grid searches over dynamic rank allocations feasible without relying on heuristics.

\begin{figure}[t]
    \centering
    \includegraphics[width=1\linewidth]{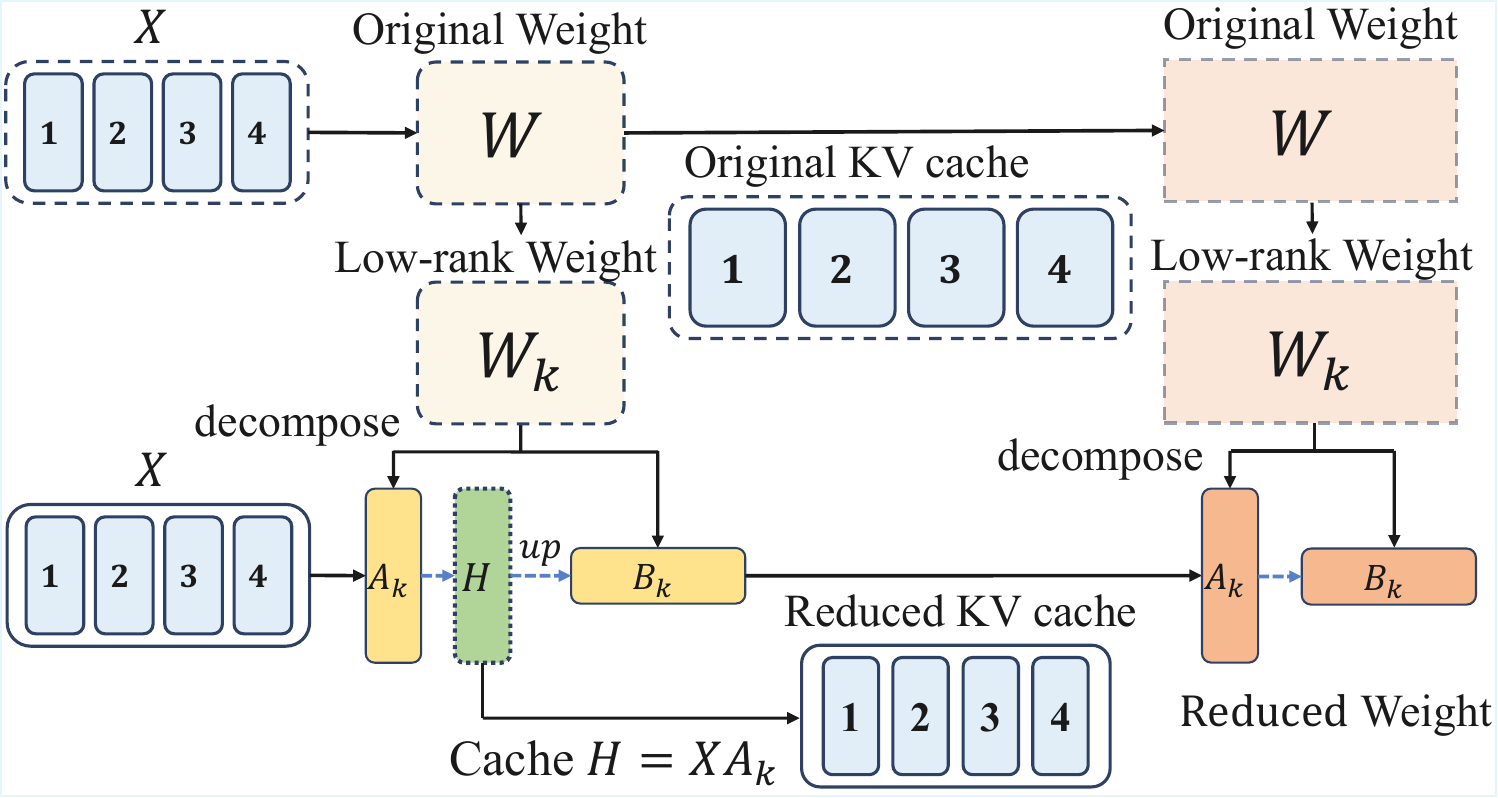}
\caption{Swift-SVD for static weights and KV cache reduction.} 
    \label{fig:kvcache_reduction}
\end{figure}

To validate the effectiveness of Swift-SVD, we conduct extensive experiments across six LLMs and eight datasets, evaluating end-to-end performance after compression using perplexity and QA accuracy. Swift-SVD consistently outperforms existing low-rank compression baselines, including gradient-based methods~\cite{qinsi2025dobisvd}. We further demonstrate the time and memory efficiency of Swift-SVD across different compression ratios, as well as its robustness to dataset scale. Our experiments also confirm its numerical stability advantages over methods that rely on repeated SVD. Additionally, we observe that layer compressibility is not solely determined by reconstruction loss. Specifically, we find that the end-to-end layer importance score~\cite{shi2025understandinglayersignificancellm} can exhibit a negative correlation with local layer-wise compressibility (see Appendix~\ref{append.NER_importance}), which motivates our search strategy for dynamic rank allocation.

Our main contributions are summarized as follows:

\begin{itemize}
    \item We derive an optimal solution for activation-aware low-rank compression of LLMs that requires only a single eigenvalue decomposition, improving efficiency, flexibility, and numerical stability.

    \item We enable fast layer-wise compression using the closed-form formulation, facilitating grid search for optimal dynamic rank allocation beyond uniform compression.

    \item We reveal a negative correlation between layer importance and compression loss, providing insight into the design of dynamic compression strategies.

    \item We conduct extensive experiments showing that Swift-SVD outperforms existing low-rank compression baselines on perplexity and QA tasks while maintaining high efficiency across diverse LLMs and datasets.
\end{itemize}

\paragraph{Conflict of Interest Disclosure.}
The authors declare no financial conflicts of interest relevant to this work.


\section{Preliminary}
\subsection{Low-Rank Compression for LLMs}
As shown in Figure~\ref{fig:kvcache_reduction}, let $X\in \mathbb{R}^{l\times m}$ denote a batch of input activations and $W\in \mathbb{R}^{m\times n}$ the corresponding weight matrix in an LLM. Low-rank compression first computes  a rank-k approximation ${W_k\in \mathbb{R}}^{m\times n}$ with $\text{rank}(W_k)=k$, and then decompose it as $W_k=A_kB_k$,  where $A_k\in {\mathbb{R}}^{m\times k}$ and $B_k\in {\mathbb{R}}^{k \times n}$. The original weight $W$ is subsequently replaced with $A_kB_k$. This reduces memory usage in two ways: 1) For model weights, the size changes from $m\times n$ to $k(m+n)$, yielding compression whenever $k(m+n)<m \times n$; 2) For KV caches, instead of caching output activations $XW\in \mathbb{R}^{l\times n}$, one can cache intermediate latents $XA_k\in {\mathbb{R}}^{l\times k}$, which requires less memory whenever $k<n$.

\subsection{Minimize layer-wise compression loss}
For low-rank model compression, a natural question is how to define the compression loss and how to find the optimal $W_k$ that minimizes it. A straightforward choice is to define the loss as the Frobenius norm $||W-W_k||_F$ between the original matrix $W$ and the low-rank approximation matrix $W_k$. Under this definition, the optimal $W_k$ is obtained via the SVD of $W$, where truncating to the top $k$ singular values and their corresponding singular vectors yields the optimal rank-$k$ approximation.

The above loss definition $||W-W_k||_F$ ignores the input activations $X$ entirely. As a result, direct SVD truncation of $W$ leads to significant performance degradation in practice. To address this limitation, recent works adopt an activation-aware (or data-dependent) loss that aims to ensure the output of the compressed LLM closely matches the original output $Y=XW$. This leads to the following formulation of the compression problem~\cite{chen2021drone, Yuan2024ASVD, wang2025svdllm, qinsi2025dobisvd, wang2025svd}:
\begin{align}
W^*_k&=\argmin_{W_k\in \mathscr{W}}||XW-XW_k||_F\label{eq_problem_1}\\
\epsilon^*_k&=||XW-XW^*_k||_F\label{eq_problem_2}
\end{align}

where ${\mathscr{W}_k}=\{W_k\in {\mathbb{R}}^{m\times n}| \text{rank}(W_k)=k\}$ denote the set of $m\times n$ matrices with rank equal to $k$, and $W^*_k$ and $\epsilon^*_k$ are the optimal compression matrix and the corresponding minimal reconstruction loss for the above optimization problem.


\section{Method}
Swift-SVD is a training-free, activation-aware low-rank compression framework for large language models that combines theoretical optimality with practical efficiency. As illustrated in Figure~\ref{fig:overview}, our method operates in two stages: a) Optimal Activation-Aware Low-Rank Compression; and b) Dynamic compression. Swift-SVD can be seamlessly applied to all types of weight matrices--such as query, key, value, and others--in the same manner. For simplicity, we use the generic notation $W$ throughout our exposition.
\begin{figure*}[t]
    \centering
    \includegraphics[width=1\textwidth]{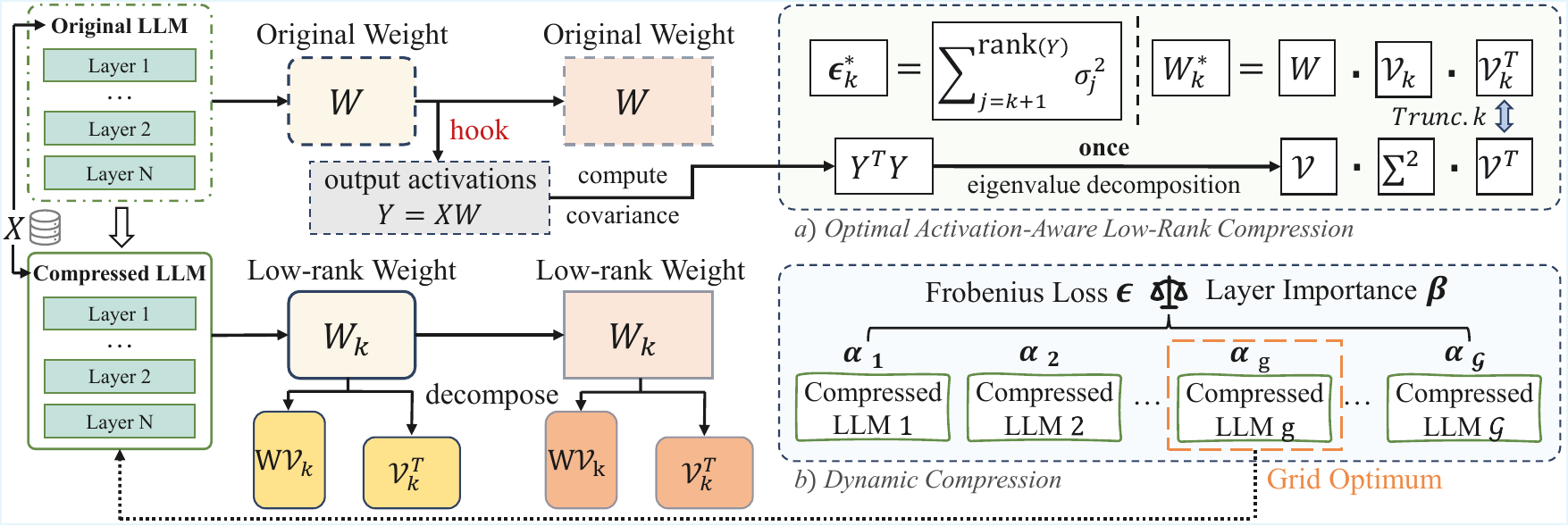}
    \caption{\textbf{Overview of Swift-SVD}. \textit{a) Optimal Activation-Aware Low-Rank Compression:} At each transformer layer, Swift-SVD hooks the output activation $Y=XW$ and incrementally aggregates the covariance matrix $Y^TY$. A single eigenvalue decomposition of this covariance yields the singular values $\Sigma$ and right singular vectors $\mathcal{V}$, from which the optimal activation-aware compression matrix $W^*_k$ and minimal reconstruction loss $\epsilon^*_k$ are derived; \textit{b) Dynamic Compression:} Swift-SVD generates a set of candidate dynamic rank allocation schemes that jointly consider local layer-wise Frobenius loss $\epsilon^*$ and end-to-end layer importance $\beta$. A lightweight grid search is then performed over these candidates—each model is compressed using the optimal solution in \textit{a)} and evaluated on a validation set—to select the configuration that yields the best end-to-end performance.}
    \label{fig:overview}
\end{figure*}

\subsection{Optimal Activation-Aware Low-Rank Compression}
This subsection presents the foundation of Swift-SVD: an activation-aware low-rank compression method that computes the optimal weight approximation for any target rank. We first establish a closed-form spectral solution that characterizes the optimal compressed weights and their minimal reconstruction loss. Then, we describe an efficient incremental algorithm to compute this solution from input activations.
\subsubsection{A closed form spectral solution}
Theoretically, Swift-SVD establishes a new theorem that fully characterizes the optimal solution to the activation-aware compression problem defined in \eqref{eq_problem_1} and \eqref{eq_problem_2}. We present the formal statement as follows:

\begin{theorem}\label{th1}
Given input activations $X$ and weight matrix $W$,  let $\mathcal{V}$ and $\Sigma$ denote the right singular vectors and singular values of $Y=XW$, respectively. For any $k<\text{rank}(Y)$, the optimal solution to the problem defined in \eqref{eq_problem_1} and \eqref{eq_problem_2} is,
\begin{align}
W^*_k&=W\mathcal{V}_k\mathcal{V}^T_k, \quad \forall k<\text{rank}(Y)\label{eq_svd_mat}\\
\epsilon^*_k&=(\sum\nolimits_{j=k+1}^{\text{rank}(Y)}\sigma_{j}^2 )^{\frac{1}{2}}, \quad \forall k<\text{rank}(Y)\label{eq_svd_loss}
\end{align}
where $\mathcal{V}_k\in \mathbb{R}^{n\times k}$ consists of the top-$k$ right singular vectors  corresponding to the $k$ largest singular values. 
\end{theorem}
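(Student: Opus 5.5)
The plan is a lower bound that holds for every feasible $W_k$, followed by a check that the proposed $W^*_k$ attains it. For the lower bound, take any $W_k$ with $\text{rank}(W_k)=k$. Then $XW_k$ has rank at most $k$, since $\text{rank}(XW_k)\le \text{rank}(W_k)=k$. By the Eckart--Young--Mirsky theorem applied to $Y=XW$ in the Frobenius norm, every matrix $Z$ of rank at most $k$ satisfies
\begin{align*}
\|Y-Z\|_F^2 \;\ge\; \sum\nolimits_{j=k+1}^{\text{rank}(Y)}\sigma_j^2 .
\end{align*}
Taking $Z=XW_k$ gives $\|XW-XW_k\|_F \ge (\sum_{j>k}\sigma_j^2)^{1/2}$ for every feasible $W_k$. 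So this quantity bounds $\epsilon^*_k$ from below.

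For attainment, write the SVD $Y=\mathcal{U}\Sigma\mathcal{V}^T$ and set $W^*_k=W\mathcal{V}_k\mathcal{V}_k^T$. Then
\begin{align*}
XW^*_k=Y\mathcal{V}_k\mathcal{V}_k^T=\mathcal{U}_k\Sigma_k\mathcal{V}_k^T ,
\end{align*}
because the right singular vectors are orthonormal, so $\mathcal{V}^T\mathcal{V}_k$ selects the first $k$ columns. This is exactly the truncated SVD of $Y$. Hence
\begin{align*}
\|Y-XW^*_k\|_F^2=\sum\nolimits_{j>k}\sigma_j^2 ,
\end{align*}
either by unitary invariance of the Frobenius norm or by reading it directly off the remaining SVD terms. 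The bound is therefore attained, which establishes both \eqref{eq_svd_mat} and \eqref{eq_svd_loss}. If the top-$k$ singular values have ties at $\sigma_k=\sigma_{k+1}$, any valid choice of $\mathcal{V}_k$ works, since the argument only uses the SVD identity.

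The step I expect to be the main obstacle is feasibility. The problem requires $\text{rank}(W^*_k)=k$ exactly, not merely at most $k$. I would check it as follows.
\begin{itemize}
\item Since $W^*_k=W\mathcal{V}_k\mathcal{V}_k^T$, we have $\text{rank}(W^*_k)\le k$.
\item Since $XW^*_k=\mathcal{U}_k\Sigma_k\mathcal{V}_k^T$ and $k<\text{rank}(Y)$ forces $\sigma_1,\dots,\sigma_k>0$, the product $XW^*_k$ has rank exactly $k$.
\item Because $\text{rank}(XW^*_k)\le\text{rank}(W^*_k)$, this gives $\text{rank}(W^*_k)\ge k$.
\end{itemize}
Together these give $\text{rank}(W^*_k)=k$, so $W^*_k\in\mathscr{W}_k$. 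This is precisely where the hypothesis $k<\text{rank}(Y)$ is needed.

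Finally, the statement is about the optimal value $\epsilon^*_k$ and an optimal matrix. The minimizer need not be unique, since one can add any matrix whose columns lie in $\ker X$. So the claim to verify is that $W\mathcal{V}_k\mathcal{V}_k^T$ achieves the minimum, not that it is the only minimizer.
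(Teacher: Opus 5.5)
Your proof is correct and follows the paper's argument essentially step for step. It uses the Eckart--Young--Mirsky lower bound via $\text{rank}(XW_k)\le k$, the identity $XW^*_k=\mathcal{U}_k\Sigma_k\mathcal{V}_k^T$, and the same two-sided rank sandwich $k=\text{rank}(XW^*_k)\le\text{rank}(W^*_k)\le k$ for feasibility. One small caveat applies only to your closing remark: adding an arbitrary matrix with columns in $\ker X$ can push the rank above $k$ and leave the feasible set, so non-uniqueness is better witnessed by perturbations of the form $Z\mathcal{V}_k^T$ with $XZ=0$, which keep both the rank and the loss unchanged.
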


\begin{proof}
Firstly, to show $W^*_k$ is optimal, it suffices to show that: 1) $||XW-XW_k||_F\geq ||XW-XW^*_k||_F, \forall \ W_k\in \mathscr{W}_k$ and 2) $W^*_k\in \mathscr{W}_k$. For 1) observe that,
\begin{equation}
\scalebox{0.95}{$
\label{eq_proof_1}
\begin{aligned}
XW^*_k &= XW\mathcal{V}_k\mathcal{V}_k^T
\stackrel{T_1}{=}Y\mathcal{V}_k\mathcal{V}_k^T 
\stackrel{T_2}{=}\mathcal{U} \Sigma (\mathcal{V}^T\mathcal{V}_k)\mathcal{V}_k^T \\
&\stackrel{T_3}{=}\mathcal{U} \left (\Sigma
\left[ \begin{smallmatrix}
I_k\\
0
\end{smallmatrix} \right] \right )\mathcal{V}_k^T 
= \left (\mathcal{U} \left[ \begin{smallmatrix}
\Sigma_k\\
0
\end{smallmatrix} \right] \right )\mathcal{V}_k^T = \mathcal{U}_k\Sigma_k\mathcal{V}_k^T
\end{aligned}
$}
\end{equation}

Where $I_k$ is a $k$ by $k$ identity matrix. $T_1$ holds because by definition $Y=XW$; $T_2$ holds because $\mathcal{U} \Sigma \mathcal{V}^T$ is the SVD of $Y$; $T_3$ holds because singular vectors form an orthogonal basis, thus $\mathcal{V}_k^T\mathcal{V}_k$ yields a $k$ by $k$ identify matrix. Thus, $XW^*_k$ is precisely the rank-$k$ truncated SVD of $Y$. By the Eckart-Young-Mirsky theorem \cite{eckart1936approximation}, it minimizes $||Y-Y'||_F$ over all $Y'$ with $\text{rank}(Y')\leq k$. Moreover, for any $W_k\in \mathscr{W}_k$, we have $\text{rank}(XW_k)\leq \text{rank}(W_k)=k$. Hence,
\begin{align}
\scalebox{0.95}{
    $ ||XW-XW_k||_F\geq ||XW-XW^*_k||_F,\ \forall \ W_k \in \mathscr{W}_k $
}
\end{align}
For 2) note that,
\begin{align}
\text{rank}(W^*_k)=\text{rank}(W\mathcal{V}_k\mathcal{V}_k^T)\leq \text{rank}(\mathcal{V}_k)\stackrel{T_4}{=}k
\end{align}
$T_4$ holds since $\mathcal{V}_k$ has $k$ orthogonal singular vectors. On the other hand,
\begin{align}
\text{rank}(W^*_k)\geq \text{rank}(XW^*_k)=\text{rank}(\mathcal{U}_k\Sigma_k\mathcal{V}_k^T)\stackrel{T_5}{=}k
\end{align}
where $T_5$ holds because $\sigma_k>0$ under the assumption $k<\text{rank}(Y)$. Thus, $\text{rank}(W^*_k)=k$, which implies $W^*_k\in \mathscr{W}_k$.

Secondly, as $XW^*_k$ is the best $k$-rank approximation of $Y=XW$, it follows direct from the Eckart-Young-Mirsky theorem that the minimal loss is, 
\begin{align}
\begin{aligned}
\epsilon^*_k&=||XW-XW^*_k||_F\\
&=||XW-\mathcal{U}_k\Sigma_k\mathcal{V}_k^T||_F=(\sum\nolimits_{j=k+1}^{\text{rank}(Y)}\sigma_{j}^2 )^{\frac{1}{2}}
\end{aligned}
\end{align}
\end{proof}

\subsubsection{Decomposition via Incremental Aggregation}
\textbf{Theorem} \ref{th1} paves a new pathway to the numerical computation of $W^*_k$ and $\epsilon^*_k$, as it suffices to compute the right singular vectors $\mathcal{V}$ and singular values $\Sigma$ of $XW$. Swift-SVD first computes the covariance matrix of $Y$ as $C=Y^TY$, and then perform an eigenvalue decomposition of $C$ to obtain $\mathcal{V}$ and $\Sigma$. To see why this works, consider the following:
\begin{align}
\scalebox{0.92 }{$
Y^TY\stackrel{T_1}{=}(\mathcal{U} \Sigma \mathcal{V}^T)^T\mathcal{U} \Sigma \mathcal{V}^T=\mathcal{V} \Sigma^T \mathcal{U}^T\mathcal{U} \Sigma \mathcal{V}^T\stackrel{T_2}{=}\mathcal{V} \Sigma^2  \mathcal{V}^T
$}
\end{align}
where $T_1$ follows from substituting the SVD of $Y$, and $T_2$ holds because left singular vectors form an orthogonal basis--i.e., $\mathcal{U}^T\mathcal{U}=I$--so  $\Sigma^T \mathcal{U}^T\mathcal{U} \Sigma=\Sigma^2$.

This approach enables Swift-SVD to solve the problem efficiently: it requires only a small amount of extra memory to store the $n\times n$ covariance matrix $C$ and a single eigenvalue decomposition, bypassing the need for Cholesky factorization or multiple SVDs. As a result, Swift-SVD is not only fast (see Section~\ref{exp.Computational} for experimental results), but also exhibits strong numerical stability (see Section~\ref{exp.Numerical} for experimental results).

 Algorithm \ref{alg:a1} presents the pseudo-code of our method. Given $X$ containing $l$ input activation vectors, $\mathbf{y}_t^T\mathbf{y}_t$ is computed and the covariance matrix $C$ is updated. After $l$ iterations, we obtain the full covariance matrix of $Y$. Finally we perform eigen-decomposition on $\kCov$ to obtain the singular values $\Sigma$ and right singular vectors $\mathcal{V}$ of $Y$. 

\begin{algorithm}[h]
\caption{Incremental Aggregation Algorithm}\label{alg:a1}
\begin{algorithmic}[1]
    \REQUIRE Input activations $X=[\mathbf{x}_1;...;\mathbf{x}_l]\in \mathbb{R}^{l\times m}$; LLM model weights $W\in \mathbb{R}^{m\times n}$
    \ENSURE singular values $\Sigma$ and  right singular vectors $\mathcal{V}$ of $Y=XW\in \mathbb{R}^{l\times n}$
    \STATE $\kCov \gets \text{zero matrix of size } (n\times n)$ 
    \FOR{$t = 1, \dots, l$}
        \STATE $\mathbf{y}_t \gets \mathbf{x}_t W $
        \STATE $\kCov \gets \kCov+\mathbf{y}_t^T\mathbf{y}_t$ 
    \ENDFOR
    \STATE $\mathcal{V}, \Sigma^2, \mathcal{V}^T \gets \text{eigen-decomposition}(\kCov)$
    \STATE \textbf{return} $\Sigma, \mathcal{V}$
\end{algorithmic}
\end{algorithm}

\subsection{Dynamic Compression}
Uniform rank allocation is often suboptimal because different layers exhibit heterogeneous redundancy. While recent works propose non-uniform strategies—e.g., SVD-LLM v2~\cite{wang2025svd} uses layer-wise reconstruction loss, and Dobi-SVD~\cite{qinsi2025dobisvd} relies on end-to-end training to determine per-layer ranks—they are not optimal and can still under-perform uniform allocation sometimes. To address this, Swift-SVD introduces a novel dynamic rank allocation scheme that jointly considers local layer-wise compressibility and end-to-end compressibility.

\subsubsection{Compressibility Analysis}
Swift-SVD’s design of dynamic compression is motivated by the observation that local compressibility (i.e., how well a layer can be approximated at low rank) and end-to-end compressibility (i.e., how much compressing a layer degrades overall model performance) are negatively correlated.

\begin{figure}[t]
    \centering
    \includegraphics[width=1\linewidth]{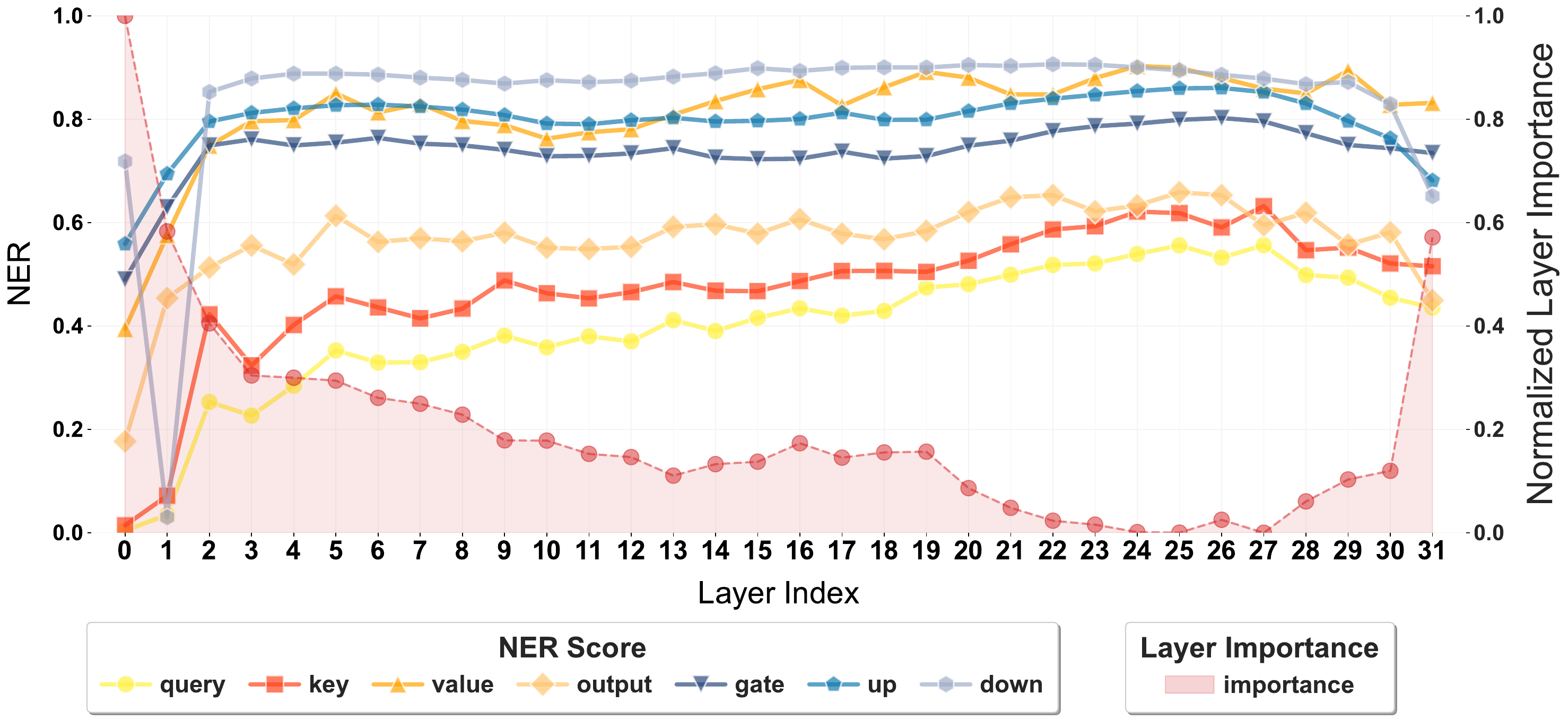}
    \caption{Layer-wise NER across distinct modules and layer importance in Mistral-7B with dataset C4.}
    \label{fig:ner_importance}
\end{figure}

To make this intuition concrete, we employ the effective rank~\citep{roy2007effective} as a quantitative measure of local compressibility. Specifically, given the singular values $\Sigma=\{\sigma_i\}_{i=1}^r$ of a layer's  output $Y=XW$--already computed by Algorithm~\ref{alg:a1}--the effective rank is,
\begin{equation}
    \text{erank}(\Sigma) = \exp\left(-\sum\nolimits_{i=1}^r p_i \ln p_i\right).
\end{equation}
where $p_i = \sigma_i / \sum_{j=1}^r \sigma_j$ is the normalized spectral distribution. A lower effective rank indicates a stronger intrinsic low-rank structure and thus higher local compressibility.

For end-to-end compressibility, we adopt the standard layer importance\footnote{The referenced code is available at \url{https://github.com/sramshetty/ShortGPT?tab=readme-ov-file}.}  metric from prior work~\cite{guo2025slimllm, shi2025understandinglayersignificancellm, song2025demystifyingrolesllmlayers, men2024shortgptlayerslargelanguage}, which estimates the contribution of the i-th layer to the overall LLM performance. Consequently, lower layer importance implies higher end-to-end compressibility.

Figure~\ref{fig:ner_importance} visualizes the normalized effective rank and normalized layer importance across all seven weight matrices in a representative LLM. Strikingly, the two exhibit a clear negative correlation: layers with high importance tend to have lower effective rank. This validates our core motivation and highlights the need to balance these divergent signals in rank allocation. (Additional results are provided in Appendix~\ref{append.NER_importance}.)

\subsubsection{Dynamic Compression Strategy}
Following above motivation, Swift-SVD first generates candidate rank allocations based on layer importance and local compression loss, then selects the best one via lightweight validation: each candidate is used to compress the model using the optimal solution in \eqref{eq_svd_mat}, and the resulting models are evaluated on a validation set to identify the one that yields the best end-to-end performance. Thanks to the closed-form spectral solution, this grid search requires no retraining and is easily parallelized.

For clarity, in this subsection we use boldface notation $\boldsymbol{x}_i$ to denote the 
i-th element of the vector $\boldsymbol{x}$, and let $L$ denote the number of layers in an LLM.

\textbf{Generating candidate rank allocations.} The pseudo-code for dynamic rank allocation is shown in \textbf{Algorithm}~\ref{alg:dynamic_rank}. Specifically, given a target compression ratio $\rho\in (0,1)$, we first compute the uniform rank $\bar{k}$ that would achieve $\rho$ under equal allocation. Then, based on a retention ratio $\delta\in (0,1]$, we assign a guaranteed minimal rank to each layer as $\boldsymbol{k}_i=\bar{k}\cdot \delta$. Next, we compute a compressibility score for each layer,
\begin{equation}
    \boldsymbol{s}_{i} = (\boldsymbol{\beta}_i)^\alpha \cdot ({\log(e + \epsilon^*_{\bar{k},i})})^{1-\alpha}
    \label{eq:sensitivity_score}
\end{equation}
where $\boldsymbol{\beta}_i$ is the layer importance of the i-th layer, min-max normalized to $[0,1]$ and shifted by 1 (i.e. mapped to [1,2]), $\epsilon^*_{\bar{k},i}$ is the minimal reconstruction loss, calculated via \eqref{eq_svd_loss}, for a rank-$\bar{k}$ approximation of i-th layer, $e$ is the base of the natural logarithm, $\alpha\in [0,1]$ is a hyperparameter balancing the influence of global importance and local compressibility. We then compute the remaining rank budget $b=\bar{k}\times L-\sum_{i=1}^L\boldsymbol{k}_i$, which forms a flexible rank pool. This pool is distributed proportionally to the score $\boldsymbol{s}_i$ as  $\boldsymbol{k}_i\leftarrow \boldsymbol{k}_i+[ b \cdot (\boldsymbol{s}_{i} / \sum_{j=1}^{L} \boldsymbol{s}_{j}) ]$. 

\textbf{Grid search over candidate rank allocations.} Swift-SVD uses a fixed retention ratio $\delta=0.5$ and $11$ scaling factors $\boldsymbol{\alpha}=[0,0.1,0.2,...,1]$ to generate $11$ candidate rank allocations. For each candidate corresponding to $\boldsymbol{\alpha}_i$, the optimal low-rank approximation of every layer is computed using the closed-form solution in \eqref{eq_svd_mat}. The resulting compressed models are then evaluated on a validation set, and the candidate that yields the best end-to-end performance is selected. (Elaborate experimental analyses of scaling factors $\boldsymbol{\alpha}$ are provided in Appendix~\ref{app.Hyperparameter}.)

\begin{algorithm}[t]
\caption{Dynamic Rank Allocation Strategy}
\label{alg:dynamic_rank}
\begin{algorithmic}[1]
\REQUIRE Frobenius loss $\epsilon$, Layer importance $\boldsymbol{\beta}$, Compression ratio $\rho$, Scaling factor $\alpha$, Retained ratio $\delta$, Size of original weights $m,n$
\ENSURE Rank allocation $\mathbf{\boldsymbol{k}}$

\STATE $\boldsymbol{\beta} \gets (\boldsymbol{\beta} - \min(\boldsymbol{\beta}))/(\max(\boldsymbol{\beta}) - \min(\boldsymbol{\beta})) + 1$
\STATE $\bar{k}=\frac{m\times n}{m+n} \times \rho$ \algcomment{\footnotesize{Uniform rank}}

    
    \FORALL{layer $i \in \{1 \dots L\}$} %
            \STATE $\boldsymbol{k}_{i} \gets \bar{k} \cdot \delta $ \algcomment{\footnotesize{Guaranteed minimal rank}}
        \STATE $\boldsymbol{s}_{i} \gets (\boldsymbol{\beta}_i)^\alpha \cdot (\log(e + \epsilon_{\bar{k},i}))^{1-\alpha}$
    \ENDFOR
    
    \STATE $b \gets \bar{k}\times L - \sum_{i=1}^{L} \boldsymbol{k}_{i}$ \algcomment{\footnotesize{Flexible rank pool}}
    
    \FORALL{layer $i \in \{1 \dots L\}$}
        \STATE $\boldsymbol{k}_{i} \gets \boldsymbol{k}_{i} + [b \cdot (\boldsymbol{s}_{i} / \sum_{j=1}^{L} \boldsymbol{s}_{j})]$
    \ENDFOR

\RETURN $\mathbf{\boldsymbol{k}}$
\end{algorithmic}
\end{algorithm}
\vspace{-0.5em}

\section{Experiments and Analysis}

\textbf{Hardware and Software Setup.}
All experiments are conducted on a single NVIDIA RTX 5090 GPU (32GB) unless otherwise specified, using PyTorch 2.8.0 and Hugging Face Transformers 4.57.3. All evaluations are run in inference mode without gradient computation.

\textbf{Baselines.}
We evaluate the performance of Swift-SVD against five state-of-the-art SVD-based LLM compression methods: FWSVD~\cite{hsu2022languagemodelcompressionweighted}, ASVD \citep{Yuan2024ASVD}, SVD-LLM \citep{wang2025svdllm}, SVD-LLM v2~\cite{wang2025svd}, and Dobi-SVD \citep{qinsi2025dobisvd}.

\textbf{Models and Datasets.}
To evaluate robustness across architectures and scales, we experiment on a diverse suite of LLMs: LLaMA-7B, LLaMA2-7B~\cite{touvron2023llamaopenefficientfoundation}, OPT-6.7B~\cite{zhang2022optopenpretrainedtransformer}, Mistral-7B~\cite{jiang2023mistral7b}, and Qwen3 (4B, 8B)~\cite{qwen3technicalreport}.
We assess performance using nine standard benchmarks: WikiText-2~\cite{merity2016pointersentinelmixturemodels}, C4~\cite{raffel2023exploringlimitstransferlearning}, and Alpaca~\cite{alpaca} for language modeling (perplexity), and OpenBookQA~\cite{OpenBookQA2018}, WinoGrande~\cite{sakaguchi2019winogrande}, HellaSwag~\cite{zellers2019hellaswag}, ARC-Easy~\cite{allenai:arc}, PIQA~\cite{bisk2020}, and MathQA~\cite{amini-etal-2019-mathqa} for zero-shot common sense reasoning.

\textbf{Evaluation metrics.} We evaluate model performance using perplexity (PPL)~\cite{bengio2003neural} and zero-shot accuracy, computational efficiency via compression time (in seconds), inference efficiency through memory usage (in GB) and throughput (in tokens/second), and numerical stability by reconstruction loss, computed from \eqref{eq_problem_2}.


\subsection{Performance Analysis}
\definecolor{mygray}{HTML}{E5E5E5}
\definecolor{textgray}{HTML}{696969}

\newcolumntype{C}{>{\centering\arraybackslash}X}

\begin{table*}[t]
\scriptsize
\caption{Performance comparison of LLaMA-7B on language modeling and zero-shot tasks. Baseline results are reported from their original papers. We mark the \textbf{best} and \underline{second-best} results. Regarding specific variants: \textbf{Swift-SVD} employs a uniform rank allocation strategy, while \textbf{Swift-SVD*} utilizes our proposed dynamic compression. \textbf{SVD-LLM(W)} denotes the uniform compression of SVD-LLM with truncation-aware whitening only, and \textbf{Dobi-SVD(w/o)} / \textbf{(w)} represent the method without / with dynamic rank allocation.}
\setlength{\tabcolsep}{2pt}
\renewcommand{\arraystretch}{0.9}

\begin{tabularx}{\textwidth}{
    >{\hsize=1.100\hsize}C |         
    >{\hsize=1.35\hsize}C |         
    *{2}{>{\hsize=0.950\hsize}C} |   
    *{7}{>{\hsize=0.950\hsize}C}     
}
\toprule
\multirow{2}{*}{\textbf{Ratio}(MEM.)} & \multirow{2}{*}{\textbf{Method}} & \multicolumn{2}{c|}{\textbf{PPL ($\downarrow$)}} & \multicolumn{7}{c}{\textbf{Accuracy ($\uparrow$)}} \\
\cmidrule(lr){3-4} \cmidrule(lr){5-11}
& & WikiText-2 & C4 & ARC\_e & PIQA & Openb. & WinoG. & HellaS. & MathQA & Avg. \\

\midrule
\textcolor{textgray}{1.0(12.6GB)} & \textcolor{textgray}{Baseline} & \textcolor{textgray}{5.68} & \textcolor{textgray}{7.34} & \textcolor{textgray}{0.76} & \textcolor{textgray}{0.79} & \textcolor{textgray}{0.34} & \textcolor{textgray}{0.70} & \textcolor{textgray}{0.57} & \textcolor{textgray}{0.27} & \textcolor{textgray}{0.57} \\
\midrule

\multirow{6}{*}{\shortstack{\textbf{0.8} \\ (10.1 GB)}} 
& FWSVD & 1727 & 1511 & 0.11 & 0.10 & 0.09 & 0.05 & 0.08 & 0.05 & 0.08 \\
& ASVD & 11.14 & 15.93 & 0.53 & 0.68 & 0.29 & 0.64 & 0.41 & 0.17 & 0.45 \\
& SVD-LLM(W) & 7.94 & 15.84 & 0.62 & 0.71 & \textbf{0.31} & 0.61 & 0.45 & 0.21 & 0.49 \\
& Dobi-SVD(w/o) & 8.87 & \underline{10.91} & - & - & - & - & - & - & - \\
& Dobi-SVD(w)  & 8.54 & \textbf{10.01} & 0.63 & 0.72 & \underline{0.30} & 0.62 & 0.46 & 0.20 & 0.49 \\
\cdashline{2-11}
\addlinespace[2pt]
& \textbf{Swift-SVD} & \underline{7.91} & 11.42 & \underline{0.64} & \underline{0.73} & 0.26 & \underline{0.68} & \underline{0.47} & \underline{0.23} & \underline{0.50} \\
\rowcolor{mygray}
\cellcolor{white} & \textbf{Swift-SVD*} & \textbf{7.84} & 11.15 & \textbf{0.65} & \textbf{0.73} & 0.27 & \textbf{0.68} & \textbf{0.48} & \textbf{0.23} & \textbf{0.51} \\
\midrule

\multirow{6}{*}{\shortstack{\textbf{0.6} \\ (7.7 GB)}}
& FWSVD & 18156 & 12847 & 0.05 & 0.05 & 0.06 & 0.02 & 0.00 & 0.03 & 0.04 \\
& ASVD & 1407 & 1109 & 0.11 & 0.13 & 0.08 & 0.09 & 0.08 & 0.08 & 0.10 \\
& SVD-LLM(W) & 13.73 & 75.42 & 0.33 & 0.63 & \textbf{0.25} & 0.55 & \textbf{0.40} & 0.12 & 0.38 \\
& Dobi-SVD(w/o)   & 14.96 & 24.60 & - & - & - & - & - & - & - \\
& Dobi-SVD(w) & 13.54 & 23.54 & 0.45 & 0.64 & 0.22 & 0.58 & 0.36 & 0.18 & 0.41 \\
\cdashline{2-11}
\addlinespace[2pt]
& \textbf{Swift-SVD} & \underline{13.42} & \underline{23.32} & \underline{0.49} & \underline{0.66} & 0.21 & \underline{0.62} & 0.37 & \underline{0.22} & \underline{0.43} \\
\rowcolor{mygray}
\cellcolor{white} & \textbf{Swift-SVD*} & \textbf{13.29} & \textbf{21.92} & \textbf{0.51} & \textbf{0.67} & \underline{0.23} & \textbf{0.62} & \underline{0.38} & \textbf{0.22} & \textbf{0.44} \\
\midrule

\multirow{6}{*}{\shortstack{\textbf{0.4} \\ (5.3 GB)}}
& FWSVD & 32194 & 29292 & 0.02 & 0.02 & 0.06 & 0.01 & 0.01 & 0.03 & 0.03 \\
& ASVD & 57057 & 43036 & 0.04 & 0.08 & 0.05 & 0.06 & 0.09 & 0.05 & 0.06 \\
& SVD-LLM(W) & 66.62 & 471.83 & 0.05 & 0.21 & 0.10 & 0.17 & 0.10 & 0.04 & 0.11 \\
& Dobi-SVD(w/o)   & \underline{58.02} & 145.41 & - & - & - & - & - & - & - \\
& Dobi-SVD(w) & \textbf{46.18} & 190.62 & 0.25 & 0.51 & 0.14 & 0.48 & 0.24 & 0.15 & 0.30 \\
\cdashline{2-11}
\addlinespace[2pt]
& Swift-SVD & 64.16 & \underline{143.74} & \underline{0.29} & \underline{0.56} & \underline{0.16} & \underline{0.52} & \underline{0.27} & \underline{0.21} & \underline{0.34} \\
\rowcolor{mygray}
\cellcolor{white} & Swift-SVD* & 62.32 & \textbf{137.77} & \textbf{0.30} & \textbf{0.57} & \textbf{0.16} & \textbf{0.53} & \textbf{0.28} & \textbf{0.21} & \textbf{0.34} \\
\bottomrule
\end{tabularx}
\centering
\label{table:Llama_com}
\end{table*}
\definecolor{mygreen}{RGB}{0, 150, 0}
\definecolor{textgray}{HTML}{696969}

\begin{table}[t]
\centering
\caption{Cross-model compression performance under 0.8 compression ratio. Baseline results are reported from their original papers. Results show PPL on WikiText-2/C4 and average accuracy on six common sense reasoning benchmarks.}
\renewcommand{\arraystretch}{1.0} 

\resizebox{\columnwidth}{!}{ 
    \renewcommand{\arraystretch}{1.2} 
    \setlength{\tabcolsep}{2pt} 
    \scriptsize 
    
    \begin{tabular}{c|cc|c|cc|c|cc|c}
    \toprule
    Model & \multicolumn{3}{c|}{OPT-6.7B} & \multicolumn{3}{c|}{LLAMA 2-7B} & \multicolumn{3}{c}{MISTRAL-7B} \\ 
    
    \cmidrule{1-10} 
    
    \multirow{3}{*}{\textbf{Method}} & \multicolumn{2}{c|}{Perplexity$\downarrow$} & Acc$\uparrow$ & \multicolumn{2}{c|}{Perplexity$\downarrow$} & Acc$\uparrow$ & \multicolumn{2}{c|}{Perplexity$\downarrow$} & Acc$\uparrow$ \\ 
    
    \cmidrule(lr){2-3} \cmidrule(lr){4-4} \cmidrule(lr){5-6} \cmidrule(lr){7-7} \cmidrule(lr){8-9} \cmidrule(lr){10-10} 
    
     & \tiny{Wiki2} & \tiny{C4} & \tiny{Avg.} & \tiny{Wiki2} & \tiny{C4} & \tiny{Avg.} & \tiny{Wiki2} & \tiny{C4} & \tiny{Avg.} \\ 
    \midrule
    
    \color{textgray}Original & \color{textgray}10.86 & \color{textgray}12.52 & \color{textgray}0.52 & \color{textgray}5.47 & \color{textgray}9.30 & \color{textgray}0.57 & \color{textgray}5.25 & \color{textgray}9.28 & \color{textgray}0.61 \\ 
    \midrule
    ASVD        & 82.04 & 102 & 0.32 & 10.10 & 24.02 & 0.36 & 13.72 & 23.34 & 0.32 \\ 
    SVD-LLM (W) & 16.04 & 21.27 & 0.41 & 8.50 & 12.69 & 0.53 & 10.21 & 13.17 & 0.42 \\
    \midrule
    Swift-SVD & 
    12.12 & 17.93 & 0.50 & 
    8.41  & 12.54 & 0.56 & 
    7.40  & 12.80 & 0.54 \\
    
    \textbf{Swift-SVD*} & 
    \textbf{11.65} & \textbf{13.68} & \textbf{0.51} & 
    \textbf{8.27}  & \textbf{12.35} & \textbf{0.56} & 
    \textbf{6.63}  & \textbf{11.08} & \textbf{0.55} \\    
    \bottomrule
    \end{tabular}
}
\label{tab:model_com}
\end{table}
\textbf{Compression with Different Methods.}
First, we benchmark Swift-SVD against state-of-the-art SVD-based compression methods across varying compression ratios. To ensure a fair comparison, baseline results are directly cited from their original papers, and Swift-SVD is evaluated under the same experimental protocol as each corresponding baseline, including calibration dataset and sample size. We evaluate performance using PPL and zero-shot accuracy on a diverse set of benchmarks. Evidently in Table~\ref{table:Llama_com}, Swift-SVD achieves a comprehensive improvement over baselines, securing the highest average accuracy across all compression levels while yielding the lowest PPL in the majority of cases. This robustness is particularly pronounced under aggressive compression, where our method maintains high performance. Our dynamic allocation strategy ensures consistent gains and avoids the instability observed in Dobi-SVD. For example, on the C4 dataset at a compression ratio of $0.4$, Dobi-SVD(w) achieves a PPL of $190.62$, which is significantly worse than the uniform counterpart Dobi-SVD(w/o) that achieves $145.41$.

The low-rank matrices produced by Swift-SVD are directly compatible with LoRA fine-tuning as initialization, following the same paradigm as SVD-LLM. The quality of initialization matters for recovery: as reported in SVD-LLM's original paper, at compression ratio 0.4 on LLaMA-7B, SVD-LLM achieves only 0.11 average accuracy after compression, and even after LoRA fine-tuning recovers to only 0.30---still below Swift-SVD's 0.34 achieved by training-free compression alone (Table~\ref{table:Llama_com}). This suggests that a poor initialization can fundamentally limit LoRA fine-tuning effectiveness, and a stronger initialization such as Swift-SVD is expected to yield further gains when combined with LoRA. We leave this as future work.
We additionally provide comparisons on Qwen3-4B under both uniform and dynamic compression in Appendix~\ref{app:qwen3}, evaluate scalability on Qwen-32B in Appendix~\ref{app:32b}, and compare with Basis Sharing~\citep{wang2025basissharingcrosslayer} in Appendix~\ref{app:basis_sharing}. Qualitative examples of text generated by compressed models are provided in Appendix~\ref{app:qualitative}.

\textbf{Performance across Different LLMs.}
To evaluate the generalization capability of Swift-SVD across diverse architectures under uniform compression, we benchmark its performance against SVD-LLM(W) and ASVD on OPT-6.7B, LLaMA2-7B, and Mistral-7B. 
As presented in Table~\ref{tab:model_com}, both Swift-SVD and Swift-SVD* outperform the baselines across all three models, demonstrating superior stability and universality across different LLM architectures.


\textbf{Cross-dataset Generalization.} 
To validate the effectiveness of the \textbf{activation-aware mechanism} in Swift-SVD, we conducted a cross-domain evaluation. We utilized the C4 dataset for calibration and evaluated the compressed model across three distinct domains. 
\definecolor{textgray}{HTML}{696969}

\begin{table}[t]
\renewcommand{\arraystretch}{1.0}
\caption{Cross-domain PPL of LLaMA-7B. \textit{Original} uses the evaluation set for calibration; \textit{PPL} uses C4-only.}
\resizebox{\columnwidth}{!}{
\begin{tabular}{@{}clcc | clcc@{}}
\toprule
\multicolumn{4}{c|}{\textbf{nsamples = 256}} & \multicolumn{4}{c}{\textbf{nsamples = 2048}} \\ 
\cmidrule(r){1-4} \cmidrule(l){5-8}
\textbf{Ratio} & \textbf{Datasets} & \textcolor{textgray}{\textbf{Original}} & \textbf{PPL} & \textbf{Ratio} & \textbf{Datasets} & \textcolor{textgray}{\textbf{Original}} & \textbf{PPL} \\ \midrule

\multirow{3}{*}{0.8} 
 & C4          & \textcolor{textgray}{11.42} & 11.42 & \multirow{3}{*}{0.8} 
 & C4          & \textcolor{textgray}{11.34} & 11.34 \\
 & WikiText 2 & \textcolor{textgray}{7.86}  & 11.33 & 
 & WikiText 2 & \textcolor{textgray}{7.81}  & 11.31 \\
 & Alpaca      & \textcolor{textgray}{8.49}  & 10.51 & 
 & Alpaca      & \textcolor{textgray}{7.84}  & 10.38 \\ \midrule

\multirow{3}{*}{0.6} 
 & C4          & \textcolor{textgray}{23.17} & 23.17 & \multirow{3}{*}{0.6} 
 & C4          & \textcolor{textgray}{22.37} & 22.37 \\
 & WikiText 2 & \textcolor{textgray}{13.42} & 37.00 & 
 & WikiText 2 & \textcolor{textgray}{13.37} & 35.02 \\
 & Alpaca      & \textcolor{textgray}{12.31} & 16.40 & 
 & Alpaca      & \textcolor{textgray}{9.90}  & 16.82 \\ \midrule

\multirow{3}{*}{0.4} 
 & C4          & \textcolor{textgray}{137.01}& 137.01& \multirow{3}{*}{0.4} 
 & C4          & \textcolor{textgray}{136.21}& 136.21\\
 & WikiText 2 & \textcolor{textgray}{64.16} & 285.87& 
 & WikiText 2 & \textcolor{textgray}{63.01} & 267.98\\
 & Alpaca      & \textcolor{textgray}{33.97} & 78.27 & 
 & Alpaca      & \textcolor{textgray}{24.18} & 77.48 \\ \bottomrule
\end{tabular}
}
\centering
\label{tab:cross_dataset}
\end{table}

As illustrated in Table~\ref{tab:cross_dataset}, significant PPL degradation is observed when the calibrated model with C4 is applied to WikiText-2 or Alpaca, demonstrating that Swift-SVD is acutely activation-aware. Further task-specific accuracy analysis under different calibration strategies is provided in Appendix~\ref{app:acc_specific}.

\textbf{Impact of Aware Samples Size.} We investigate the sensitivity of Swift-SVD to the calibration sample size $N$ in Figure~\ref{fig:acc_ppl_trend}. The results demonstrate a clear trend: performance improves rapidly in the low-sample regime but shows diminishing returns as $N$ increases further. While larger samples continue to yield marginal gains, we adopt the standard setting of $N=256$ to ensure a fair comparison with baselines. Details on calibration data construction are provided in Appendix~\ref{app:calibration}.

\begin{figure}[t]
    \centering
    \includegraphics[width=1\linewidth]{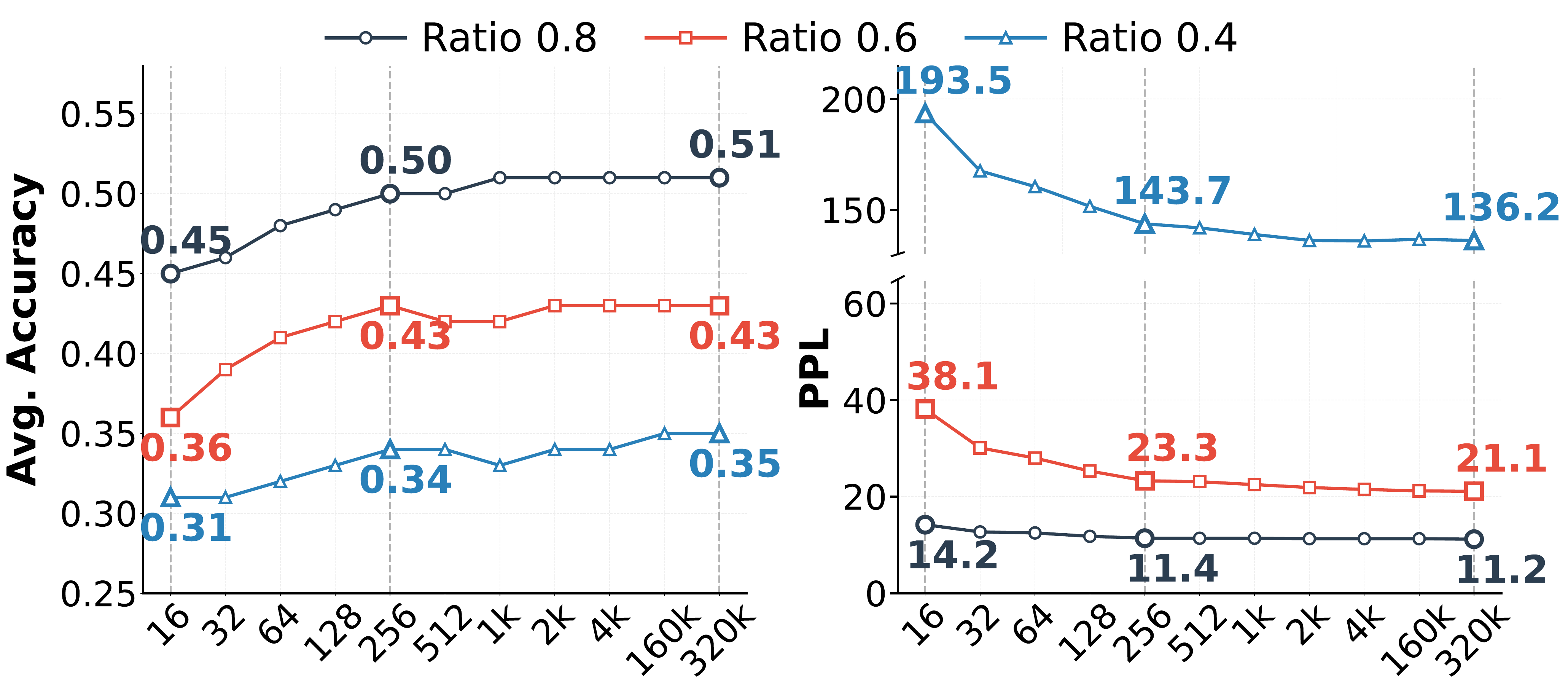}
    \caption{Impact of calibration sample size $N$ on model performance. We report average accuracy on zero-shot tasks (left) and PPL on C4 (right) across three compression ratios.}
    \label{fig:acc_ppl_trend}
    \vspace{-1em}
\end{figure}

\vspace{-0.5em}
\subsection{Computational Efficiency}
\label{exp.Computational}

\paragraph{End-to-end Compression Time.} 
We compare the end-to-end compression time of Swift-SVD against baseline methods under a uniform compression strategy across three compression ratios: $\rho=0.8$, $\rho=0.6$ and $\rho=0.4$. The results, shown in Table~\ref{tab:compression_time}, demonstrate that Swift-SVD achieves substantial speedups over all baselines--up to $3.8\times$ compared to SVD-LLM(W), and up to $76.9\times$ compared to Dobi-SVD(w/o). This efficiency stems from two key advantages: First, Algorithm~\ref{alg:a1} is highly efficient—it only requires incremental aggregation of the activation covariance matrix followed by a single eigen-decomposition. In contrast, baseline methods (notably Dobi-SVD) perform a full SVD for each input sample, which is computationally prohibitive; Second, Swift-SVD computes the entire optimal solution spectrum, as shown in \eqref{eq_svd_mat} in one pass. Consequently, for any subsequent compression ratio (e.g., $\rho=0.6$ or $\rho=0.4$), no re-compression is needed. As a result, the end-to-end compression time collapses to the time required to load the compressed weights into memory.

\begin{table}[h]
\centering
\caption{End-to-end compression latency (in seconds) evaluated on the C4 dataset. We report the total time required to compress the complete model.}
\renewcommand{\arraystretch}{0.9}
\label{tab:compression_time}
\fontsize{9}{11}\selectfont
\resizebox{\columnwidth}{!}{
\begin{tabular}{@{}c l rrr r r@{}}
\toprule
\textbf{Samples} & \textbf{Method} & $\rho=0.8$ & $\rho=0.6$ & $\rho=0.4$ & \textbf{Total (s)} & \textbf{Speedup} \\ \midrule

\multirow{3}{*}{16} 
 & Dobi-SVD(w/o)    & 960    & 650    & 373    & 1,983  & $1.0\times$ \\
 & SVD-LLM (W) & 542    & 489    & 503    & 1,534  & $1.3\times$ \\
 & \textbf{Swift-SVD} & \textbf{342} & \textbf{146} & \textbf{133} & \textbf{621} & \textbf{3.2$\times$} \\ \midrule

\multirow{3}{*}{64} 
 & Dobi-SVD(w/o)    & 3,823  & 2,551  & 1,508  & 7,882  & $1.0\times$ \\
 & SVD-LLM (W) & 555    & 530    & 565    & 1,650  & $4.8\times$ \\
 & \textbf{Swift-SVD} & \textbf{346} & \textbf{158} & \textbf{150} & \textbf{654} & \textbf{12.1$\times$} \\ \midrule

\multirow{3}{*}{256} 
 & Dobi-SVD(w/o)    & 15,269 & 10,468 & 5,966  & 31,703 & $1.0\times$ \\
 & SVD-LLM (W) & 757    & 734    & 722    & 2,213  & $14.3\times$ \\
 & \textbf{Swift-SVD} & \textbf{453} & \textbf{152} & \textbf{148} & \textbf{753} & \textbf{42.1$\times$} \\ \midrule

\multirow{3}{*}{512} 
 & Dobi-SVD(w/o)    & 30,862 & 20,984 & 11,795 & 63,641 & $1.0\times$ \\
 & SVD-LLM (W) & 1,080  & 1,069  & 1,063  & 3,212  & $19.8\times$ \\
 & \textbf{Swift-SVD} & \textbf{540} & \textbf{157} & \textbf{130} & \textbf{827} & \textbf{76.9$\times$} \\ \bottomrule
\end{tabular}
}
\end{table}

\paragraph{Inference Speedup and Memory Reduction.}

To quantify the acceleration and memory reduction achieved by Swift-SVD, we evaluate the inference throughput (tokens per second) and peak memory footprint of LLaMA-7B on a single NVIDIA 5090 GPU. We report performance under various batch sizes and sequence lengths (see Appendix~\ref{sec:throughput_batch}), and compare serving throughput across methods in Appendix~\ref{sec:throughput_comparison}. Throughput results at longer generation lengths are provided in Appendix~\ref{sec:throughput_long}. Furthermore, we analyze the scaling behavior of the total model size, comprising both compressed weights and reduced KV cache overhead, across various compression ratios. Our experimental results shown in Figure~\ref{fig:throughput} demonstrate that as the compression ratio decreases, our method significantly enhances inference efficiency by boosting throughput and alleviating HBM pressure.

\begin{figure}[t]
    \centering
    \includegraphics[width=1\linewidth]{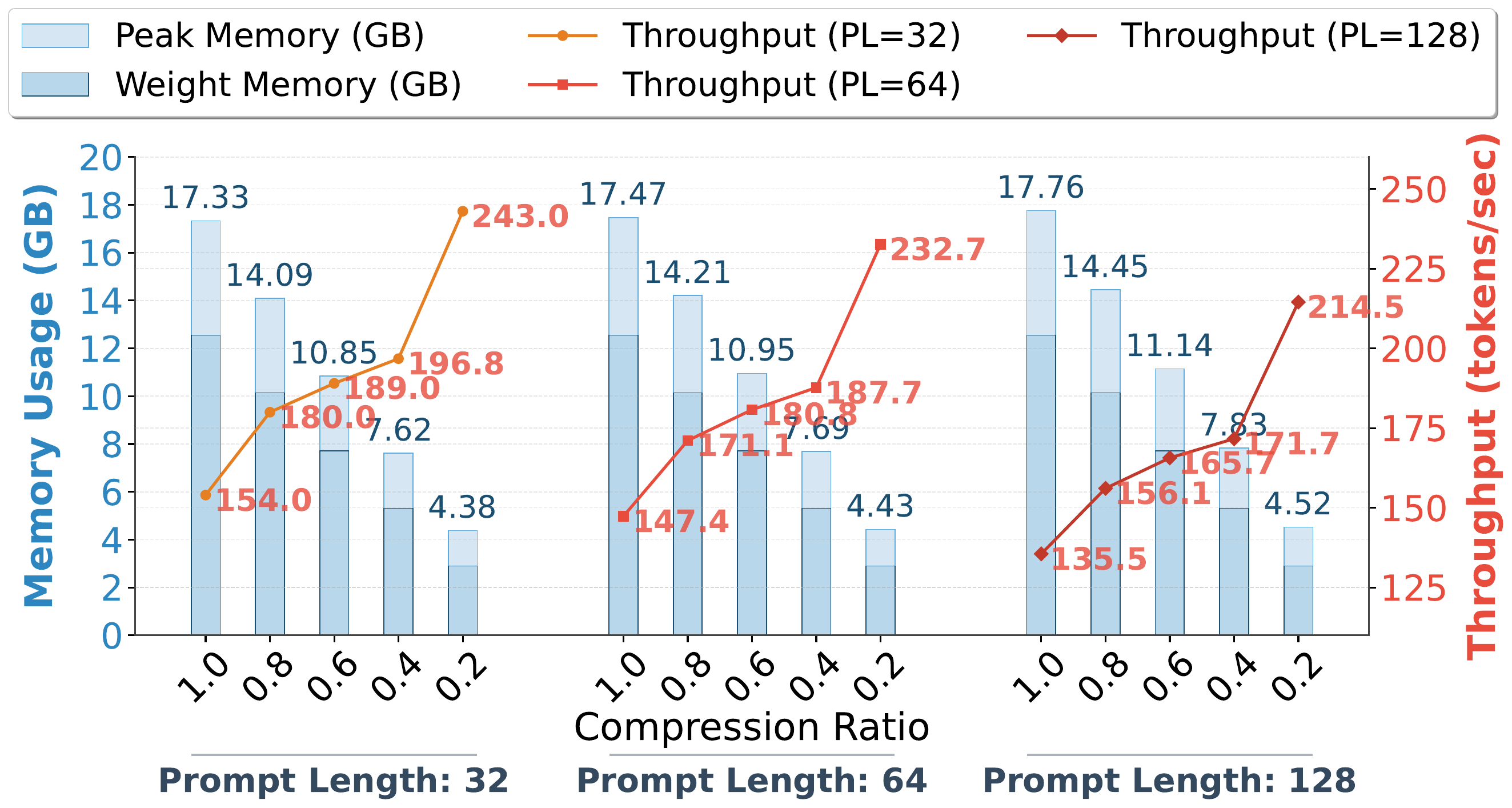}
    \caption{Throughput improvement and memory efficiency under batch size of 16. The generated sequence length is 1024.}
    \label{fig:throughput}
    \vspace{-1em}
\end{figure}

\subsection{Numerical Stability}
\label{exp.Numerical}

To evaluate the numerical stability of Swift-SVD and baseline methods, we generate random matrices of varying sizes to simulate input activations $X$ and weights $W$ across a range of dimensions. With a fixed compression ratio of $0.6$, we compare the reconstruction loss of each method against the theoretical minimum loss. Although both SVD-LLM and Dobi-SVD are designed to be theoretically optimal, they all suffer from numerical instabilities, resulting in reconstruction losses consistently higher than the theoretical minimum. In contrast, Swift-SVD achieves near-perfect alignment with the theoretical optimum across all scales. These results confirm that Swift-SVD provides a more numerically robust solution. 

\begin{table}[h]
\centering
\scriptsize
\caption{Reconstruction loss and absolute error for randomly generated matrices of varying shapes under ratio of 0.6 (FP32).}
\label{tab:precision}
\fontsize{9}{10}\selectfont
\resizebox{\columnwidth}{!}{
\begin{tabular}{lcc}
\toprule
\textbf{Input $\times$ Weight} & $[128 \times 128]^2 $ & $[1024 \times 1024]^2$ \\ \midrule
Minimum              & 126.2506 & 841.1812 \\
SVD-LLM              & 126.7102 {\scriptsize \textcolor{red!70!black}{(+0.4596)}} & 854.4129 {\scriptsize \textcolor{red!70!black}{(+13.2317)}} \\
Dobi-SVD             & 128.5441 {\scriptsize \textcolor{red!70!black}{(+2.2935)}} & 874.7711 {\scriptsize \textcolor{red!70!black}{(+33.5899)}} \\
\rowcolor{gray!15}
Swift-SVD                 & 126.2506 {\scriptsize \textcolor{green!60!black}{(+0.0000)}} & 841.1812 {\scriptsize \textcolor{green!60!black}{(+0.0000)}} \\ 

\midrule
\midrule

\textbf{Input $\times$ Weight} & $[2048 \times 2048]^2$ & $[4096 \times 4096]^2$ \\ \midrule
Minimum              & 1657.4801 & 3308.6428 \\
SVD-LLM              & 1686.1804 {\scriptsize \textcolor{red!70!black}{(+28.7003)}} & 3365.6499 {\scriptsize \textcolor{red!70!black}{(+57.0071)}} \\
Dobi-SVD             & 1724.2129 {\scriptsize \textcolor{red!70!black}{(+66.7328)}} & 3442.5242 {\scriptsize \textcolor{red!70!black}{(+133.8814)}} \\
\rowcolor{gray!15}
Swift-SVD                 & 1657.4801 {\scriptsize \textcolor{green!60!black}{(+0.0000)}} & 3308.6428 {\scriptsize \textcolor{green!60!black}{(+0.0000)}} \\ \bottomrule
\end{tabular}
}
\end{table}




\subsection{Ablation Study}

To evaluate the contribution of each component, we compare our proposed Swift-SVD* against several variants: (1) Swift-SVD, a baseline that assigns a uniform rank; (2) Swift-SVD(C), which mirrors the dynamic rank allocation strategy of SVD-LLM v2~\cite{wang2025svd} by allocating ranks solely based on Frobenius loss, and Swift-SVD(I), which allocate ranks based on layer importance without any preserved ratio; and (3) Swift-SVD$^{\dagger}$(C) and Swift-SVD$^{\dagger}$(I), which incorporate a fixed preserved ratio of $\delta=0.5$. As shown in Table~\ref{tab:ablation_dynamic}, unrestricted dynamic allocation proves detrimental. This confirms that relying exclusively on Frobenius loss or layer importance risks undermining the module's essential representation capacity by over-compressing specific modules. In contrast, enforcing a fixed preserved ratio (e.g., $\delta=0.5$) acts as a stabilizer, effectively mitigating error propagation and reversing this degradation. Ultimately, Swift-SVD* achieves the best performance by synergizing this basic structural preservation with fine-grained redundancy exploitation. We further analyze the sensitivity of hyperparameters $\alpha$ and $\delta$ in Appendix~\ref{app.Hyperparameter}, and visualize the singular value distribution motivating our dynamic allocation design in Appendix~\ref{app:svd_dist}.

\begin{table}[t]
\centering
\caption{PPL ($\downarrow$) of compressed LLMs of different allocation strategies across various models under the ratio of 0.8 on C4.}
\label{tab:ablation_dynamic}
\setlength{\tabcolsep}{4.5pt}
\fontsize{9}{12}\selectfont
\setlength\arrayrulewidth{0.6pt}
\resizebox{0.49\textwidth}{!}{
\begin{tabular}{lcccc}
\toprule
\textbf{Strategy} & \textbf{LLaMA-7B} & \textbf{LLaMA-2-7B} & \textbf{OPT-6.7B} & \textbf{Mistral-7B} \\ 
\midrule
Swift-SVD & 11.42 & 12.54 & 17.93 & 12.80 \\ 
\cdashline{1-5}
\addlinespace[2pt]

Swift-SVD(C) & 16.04 & 22.16 & 20.69 & 22.87 \\ 
Swift-SVD(I) & 14.88 & 17.30 & 18.94 & 22.50 \\ 


Swift-SVD$^{\dagger}$(C) & 11.78 & 13.74 & 15.66 & 11.67 \\ 
Swift-SVD$^{\dagger}$(I) & 11.73 & 13.27 & 14.74 & 11.48 \\ 

\cdashline{1-5}
\addlinespace[2pt]

\textbf{Swift-SVD*} & \textbf{11.15} & \textbf{12.35} & \textbf{13.68} & \textbf{11.08} \\ 
\bottomrule
\end{tabular}
}
\vspace{-1em}
\end{table}


\vspace{-0.5em}
\section{Related Work}

\paragraph{Rank Analysis in Language Models.}
Early work has investigated the relationship between the rank of transformer weights or representations and model performance, seeking either to leverage low-rank structure for efficiency~\cite{chen2021scatterbrain, hsu2022languagemodelcompressionweighted, hajimolahoseini2022strategies, li2023losparse}, to prevent rank collapse that limits expressivity~\cite{dong2021attention, noci2022signal, yaras2024compressible}, or to maximize rank utilization for enhanced modeling capacity~\cite{bhojanapalli2020low, boix2023transformers}. With the growing use of LLMs, research has turned to their inherent low-rank properties. LoRA~\cite{hu2022lora} leverages this structure during fine-tuning, showing that many weight updates lie in low-dimensional subspaces. Loki~\cite{singhania2024loki} examined the key representations in attention layers and found that they often reside in lower-dimensional subspaces across models and datasets, which can be used for efficient sparse attention. These directions also motivated growing efforts on compression~\cite{shi2024keep} to address the deployment bottleneck in reading and storing the model and KV cache~\cite{yu2022orca}.

\paragraph{Low-Rank Model Compression.}

Recent studies explore low-rank joint KV cache compression to facilitate scalable inference. MHA2MLA~\cite{ji2025towards} and PALU~\cite{chang2024palu} employ SVD to reformulate Multi-Head Attention (MHA) into Multi-head Latent Attention (MLA). While effective, these weight-only approaches overlook the intrinsic low-rank properties of activations; as evidenced by~\cite{yu2023compressing}, transformer weights typically exhibit higher rank than their corresponding activations, suggesting that activation-aware compression is more effective. In this direction, DRONE~\cite{chen2021drone} establishes a closed-form solution for intermediate representations, yet its heavy reliance on caching full activations poses significant memory constraints for LLMs. FWSVD~\cite{hsu2022languagemodelcompressionweighted} incorporates Fisher information for importance weighting, albeit at the cost of expensive gradient computations. ASVD~\cite{Yuan2024ASVD} attempts to normalize activation impact via diagonal scaling but fails to reach the theoretical minimum truncation loss. More recently, KV-CoRE~\cite{chen2026kv, chen2025towards}, Dobi-SVD~\cite{qinsi2025dobisvd}, and SVD-LLM~\cite{wang2025svd} have achieved theoretical optimum. However, KV-CoRE focuses on compressibility analysis without practical compressibility scheme. Dobi-SVD relies on Incremental PCA and gradient-based training, a combination that leads to numerical instability and renders the implementation both time-consuming and memory-intensive. SVD-LLM utilizes Cholesky decomposition, which necessitates that the matrices remain positive-definite, a condition that is challenging in diverse activation distributions. Layer importance scores have also been leveraged to guide compression decisions. ShortGPT~\cite{men2024shortgptlayerslargelanguage} measures each layer's contribution via Block Influence (BI) scores and removes entire redundant layers. MoDeGPT~\cite{che2024modegpt} applies BI scores for per-block rank allocation in a modular decomposition framework. In contrast, Swift-SVD retains all layers and jointly considers both layer-wise reconstruction loss and layer importance for fine-grained per-matrix rank allocation; as shown in our ablation (Table~\ref{tab:ablation_dynamic}), layer importance alone proves insufficient for low-rank compression. These limitations necessitate a unified framework that attains the theoretical optimum of truncation error without compromising computational efficiency or numerical stability.
\section{Conclusion}

We propose Swift-SVD, a training-free activation-aware compression framework that reconciles theoretical optimum with practical efficiency. By formulating compression as a closed-form eigenvalue decomposition problem, Swift-SVD eliminates the numerical instability and computational bottlenecks. Swift-SVD further exploits layer-wise compressibility and importance for dynamic rank allocation strategy. Extensive experiments demonstrate that Swift-SVD delivers 3–70$\times$ end-to-end compression speedups while maintaining state-of-the-art performance across diverse architectures. Furthermore, as Swift-SVD's compressed model is directly compatible with LoRA fine-tuning as initialization, combining the two is a promising direction for further performance recovery, which we leave as future work.

\section*{Acknowledgements}
We thank the anonymous reviewers for their thoughtful comments and constructive suggestions, which helped improve this work. We also thank Zhuoran Wang, Jiayu Qin, Meng Wang, and Daxiang Kang for their valuable discussions, constructive feedback, and technical support throughout the development of this work. Their insights contributed to shaping the ideas and experiments presented in this paper.

\section*{Impact Statement}
This paper presents work whose goal is to advance the field of Machine Learning. There are many potential societal consequences of our work, none of which we feel must be specifically highlighted here.


\newpage
\appendix
\onecolumn

\newpage
\counterwithin{figure}{section}
\counterwithin{table}{section}
\counterwithin{equation}{section}

\section{Additional Experimental Details and Analysis}

In this section, we provide additional details on calibration data and an analysis of the hyperparameters used in our dynamic compression experiments.
\vspace{-0.5em}

\subsection{Calibration Data}
\label{app:calibration}

During the incremental statistic aggregation phase, we randomly select $N=256$ calibration samples. For text datasets such as WikiText-2 and C4, each sample is processed with a fixed sequence length of 2048 tokens. In contrast, for conversational datasets like Alpaca and zero-shot common sense reasoning benchmarks (e.g., PIQA), samples are constructed by formatting individual raw data entries according to their official prompt templates and concatenating them as discrete, individually separated instances. This ensures that each data entry remains distinct, although it precludes a guaranteed uniform sequence length across all calibration instances. Under such variable-length conditions, the SVD-LLM method is highly susceptible to producing non-positive-definite matrices $X^T X$, frequently leading to numerical instability or complete decomposition failure. Swift-SVD effectively circumvents these limitations by utilizing a direct closed-form eigenvalue decomposition, ensuring robust performance regardless of sequence irregularity or the presence of padding/formatting boundaries.
\vspace{-0.5em}

\subsection{Hyperparameter Analysis in Dynamic Compression}
\label{app.Hyperparameter}

In our dynamic rank allocation strategy, there are two primary hyperparameters: the \textbf{scaling factor $\alpha \in [0, 1]$} and the \textbf{preserved ratio $\delta \in [0, 1]$}. 1) $\alpha$ serves as a critical hyperparameter that modulates the trade-off between reconstruction loss $\epsilon$ and layer importance $\beta$. Specifically, $\alpha=0$ reduces the strategy to a purely loss-driven allocation, whereas $\alpha=1$ prioritizes layer importance exclusively. 2) $\delta$ acts as a performance stabilizer. Without a preserved ratio $\delta=0$, unrestricted dynamic allocation can inadvertently destroy the representational capacity of critical layers, leading to significant degradation. Introducing a fixed baseline (e.g., $\delta=0.5$) effectively mitigates error propagation and reverses this trend. However, excessively large $\delta$ values constrain the flexible budget available for reallocation, slightly reducing the optimization gains from dynamic redundancy exploitation. Notably, when $\delta=1.0$, the rank allocation $\boldsymbol{k}$ matches the uniform target rank $\bar{k}$ for all modules, rendering the strategy equivalent to uniform compression.

Swift-SVD eliminates the overhead of redundant SVD operations by performing a single eigenvalue decomposition of activation statistics and caching the resulting spectral components ($\Sigma$ and $\mathcal{V}$) for subsequent reuse. This allows for direct truncation and model reconstruction based on dynamic rank assignments, facilitating the rapid acquisition of the full compressed model. Such computational efficiency provides the necessary foundation for expanding our search grid over the hyperparameter space. By defining a comprehensive and reasonable search grid, we can effectively identify the optimal dynamic allocation configuration that maximizes performance of the compressed model within the compressed candidates.

\begin{figure}[h]
    \centering
    \includegraphics[width=0.75\linewidth]{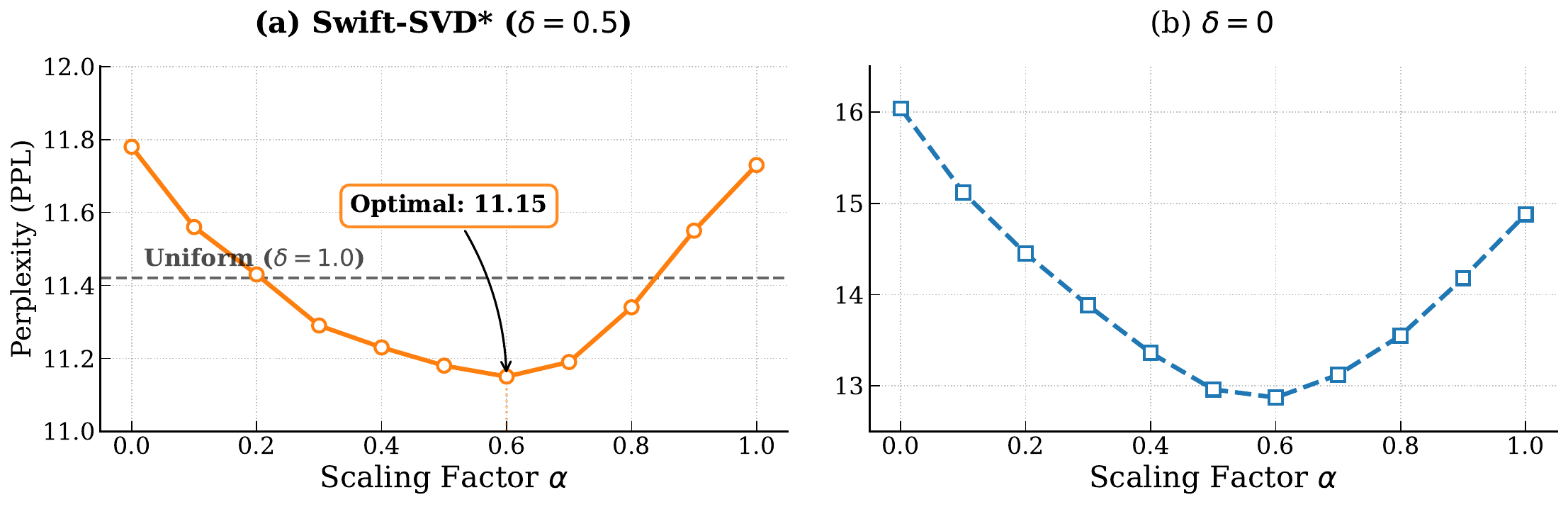}
    \caption{Impact of hyperparameters $\alpha$ and $\delta$ on model performance for dynamic compression. Both configurations exhibit a U-shaped PPL curve.}
    \label{fig:delta_alpha_analysis}
\end{figure}

\section{Additional Experiment Results}


\subsection{Accuracy on Specific Task}
\label{app:acc_specific}

To evaluate the impact of calibration data on downstream task performance, we compare three calibration strategies across seven zero-shot common sense reasoning. As shown in Table~\ref{tab:acc_aware}, we define three settings: \textit{C4} (calibrated on general C4 data), \textit{Each} (calibrated specifically on the validation set of each respective task), and \textit{All} (calibrated on a unified mixture of all validation datasets). We observe a clear hierarchy: \textit{Each} $\gtrsim$ \textit{All} $>$ \textit{C4}.

\textbf{Domain Specificity (\textit{Each}):} Achieves the highest accuracy, confirming that task-aligned calibration is optimal for preserving task-critical features.

\textbf{Robust Aggregation (\textit{All}):} Matches the oracle \textit{Each} performance closely, proving that Swift-SVD effectively fuses distinct feature subspaces into a single projection matrix to some extent.

\textbf{Generalization Limit (\textit{C4}):} The notable accuracy drop illustrates the inadequacy of general-purpose data in capturing fine-grained patterns required for specific tasks.

In summary, while domain-specific calibration is optimal, Swift-SVD demonstrates strong capacity for unified calibration, allowing a single compressed model to serve multiple downstream tasks effectively when provided with mixed calibrations.

\definecolor{textgray}{HTML}{696969}

\begin{table}[h]
\centering
\caption{Benchmark results categorized by compression ratios and aware methods. The Original row is highlighted with gray text.}
\label{tab:acc_aware}
\renewcommand{\arraystretch}{0.85}
\resizebox{0.75\columnwidth}{!}{%
\begin{tabular}{@{}c cc cccccccc@{}}
\toprule
\multirow{2}{*}{\textbf{Model}} & \textbf{Ratio} & \textbf{Aware} & \textbf{ARC\_e} & \textbf{ARC\_c} & \textbf{PIQA} & \textbf{Openb.} & \textbf{WinoG.} & \textbf{mathQA} & \textbf{HellaS.} & \textbf{Avg} \\ 
\cmidrule(lr){4-11} 

\multirow{12}{*}{\textbf{Llama-7B}} 
& \textcolor{textgray}{Original} & \textbf{Object} & \textcolor{textgray}{0.76} & \textcolor{textgray}{0.42} & \textcolor{textgray}{0.79} & \textcolor{textgray}{0.34} & \textcolor{textgray}{0.70} & \textcolor{textgray}{0.27} & \textcolor{textgray}{0.67} & \textcolor{textgray}{0.56} \\ 
\cmidrule{1-11} 

& \multirow{3}{*}{0.8} 
  & C4   & 0.64 & 0.33 & 0.73 & 0.26 & 0.67 & 0.23 & 0.47 & 0.48 \\
& & All  & 0.70 & 0.38 & 0.76 & 0.29 & 0.68 & 0.26 & 0.49 & 0.51 \\ 
& & Each & 0.71 & 0.38 & 0.76 & 0.30 & 0.68 & 0.26 & 0.49 & 0.51 \\
\cmidrule{2-11}

& \multirow{3}{*}{0.6} 
  & C4   & 0.49 & 0.25 & 0.66 & 0.21 & 0.62 & 0.22 & 0.37 & 0.40 \\
& & All  & 0.61 & 0.31 & 0.72 & 0.25 & 0.62 & 0.25 & 0.40 & 0.45 \\ 
& & Each & 0.63 & 0.32 & 0.72 & 0.26 & 0.62 & 0.26 & 0.41 & 0.46 \\
\cmidrule{2-11}

& \multirow{3}{*}{0.4} 
  & C4   & 0.29 & 0.20 & 0.56 & 0.16 & 0.52 & 0.21 & 0.27 & 0.32 \\
& & All  & 0.46 & 0.23 & 0.63 & 0.18 & 0.53 & 0.22 & 0.30 & 0.36 \\ 
& & Each & 0.53 & 0.25 & 0.64 & 0.20 & 0.56 & 0.25 & 0.31 & 0.39 \\
\midrule 

\multirow{12}{*}{\textbf{Qwen3-4B}}
& \textcolor{textgray}{Original} & \textbf{} & \textcolor{textgray}{0.81} & \textcolor{textgray}{0.50} & \textcolor{textgray}{0.75} & \textcolor{textgray}{0.30} & \textcolor{textgray}{0.66} & \textcolor{textgray}{0.46} & \textcolor{textgray}{0.52} & \textcolor{textgray}{0.57} \\ 
\cmidrule{1-11} 

& \multirow{3}{*}{0.8} 
  & C4   & 0.69 & 0.37 & 0.70 & 0.26 & 0.61 & 0.29 & 0.42 & 0.48 \\
& & All  & 0.71 & 0.41 & 0.71 & 0.27 & 0.61 & 0.39 & 0.43 & 0.50 \\ 
& & Each & 0.73 & 0.42 & 0.74 & 0.27 & 0.62 & 0.44 & 0.44 & 0.52 \\
\cmidrule{2-11}

& \multirow{3}{*}{0.6} 
  & C4   & 0.54 & 0.26 & 0.65 & 0.23 & 0.59 & 0.23 & 0.35 & 0.41 \\
& & All  & 0.58 & 0.32 & 0.69 & 0.24 & 0.60 & 0.32 & 0.36 & 0.45 \\ 
& & Each & 0.62 & 0.33 & 0.71 & 0.25 & 0.60 & 0.36 & 0.38 & 0.46 \\
\cmidrule{2-11}

& \multirow{3}{*}{0.4} 
  & C4   & 0.28 & 0.19 & 0.56 & 0.14 & 0.49 & 0.20 & 0.27 & 0.30 \\
& & All  & 0.44 & 0.23 & 0.62 & 0.17 & 0.52 & 0.27 & 0.28 & 0.36 \\ 
& & Each & 0.49 & 0.23 & 0.64 & 0.19 & 0.54 & 0.31 & 0.30 & 0.39 \\
\midrule 

\multirow{12}{*}{\textbf{Qwen3-8B}} 
& \textcolor{textgray}{Original} & \textbf{} & \textcolor{textgray}{0.83} & \textcolor{textgray}{0.55} & \textcolor{textgray}{0.76} & \textcolor{textgray}{0.31} & \textcolor{textgray}{0.68} & \textcolor{textgray}{0.50} & \textcolor{textgray}{0.57} & \textcolor{textgray}{0.60} \\ 
\cmidrule{1-11} 

& \multirow{3}{*}{0.8} 
  & C4   & 0.75 & 0.44 & 0.73 & 0.27 & 0.64 & 0.32 & 0.47 & 0.52 \\
& & All  & 0.76 & 0.45 & 0.74 & 0.29 & 0.65 & 0.45 & 0.47 & 0.54 \\
& & Each & 0.77 & 0.45 & 0.75 & 0.30 & 0.65 & 0.48 & 0.48 & 0.55 \\ 
\cmidrule{2-11}

& \multirow{3}{*}{0.6} 
  & C4   & 0.57 & 0.32 & 0.68 & 0.23 & 0.60 & 0.23 & 0.38 & 0.43 \\
& & All  & 0.68 & 0.37 & 0.71 & 0.26 & 0.61 & 0.39 & 0.39 & 0.49 \\
& & Each & 0.70 & 0.38 & 0.71 & 0.26 & 0.62 & 0.43 & 0.41 & 0.50 \\
 
\cmidrule{2-11}

& \multirow{3}{*}{0.4} 
  & C4   & 0.36 & 0.19 & 0.59 & 0.14 & 0.54 & 0.22 & 0.30 & 0.34 \\
& & All  & 0.53 & 0.26 & 0.65 & 0.21 & 0.54 & 0.29 & 0.32 & 0.40 \\ 
& & Each & 0.56 & 0.28 & 0.66 & 0.22 & 0.55 & 0.36 & 0.33 & 0.42 \\
\bottomrule

\end{tabular}
}
\end{table}

\subsection{Comparison with Basis Sharing}
\label{app:basis_sharing}

We compare Swift-SVD against Basis Sharing~\citep{wang2025basissharingcrosslayer}, a concurrent method that addresses a similar low-rank compression problem. Basis Sharing results are reported directly from their original paper under the FP32, LLaMA-7B setting, and Swift-SVD is evaluated under the same experimental protocol, including calibration dataset (C4) and sample size.
Table~\ref{tab:basis_sharing} reports perplexity and zero-shot accuracy results.
Swift-SVD achieves lower perplexity on C4 and higher average accuracy across most compression ratios.
On WikiText-2 at ratio 0.6, Basis Sharing achieves comparable perplexity, while Swift-SVD recovers better on downstream task accuracy.

\definecolor{textgray}{HTML}{696969}

\begin{table}[h]
\centering
\caption{Perplexity and zero-shot accuracy comparison between Basis Sharing and Swift-SVD on LLaMA-7B (FP32). The Original row is highlighted with gray text.}
\label{tab:basis_sharing}
\renewcommand{\arraystretch}{1.0}
\resizebox{0.8\columnwidth}{!}{%
\begin{tabular}{@{}c l cc cccccccc@{}}
\toprule
\multirow{2}{*}{\textbf{Ratio}} & \multirow{2}{*}{\textbf{Method}}
  & \multicolumn{2}{c}{\textbf{PPL} $\downarrow$}
  & \multicolumn{8}{c}{\textbf{Accuracy} $\uparrow$} \\
\cmidrule(lr){3-4} \cmidrule(lr){5-12}
& & \textbf{Wiki} & \textbf{C4}
  & \textbf{ARC-e} & \textbf{ARC-c} & \textbf{PIQA} & \textbf{Openb.} & \textbf{WinoG.} & \textbf{MathQA} & \textbf{HellaS.} & \textbf{Avg} \\
\midrule
\textcolor{textgray}{1.0} & \textcolor{textgray}{Original}
  & \textcolor{textgray}{5.68} & \textcolor{textgray}{7.34}
  & \textcolor{textgray}{0.76} & \textcolor{textgray}{0.38} & \textcolor{textgray}{0.79} & \textcolor{textgray}{0.34} & \textcolor{textgray}{0.70} & \textcolor{textgray}{0.27} & \textcolor{textgray}{0.51} & \textcolor{textgray}{0.64} \\
\cmidrule{1-12}
\multirow{2}{*}{0.8}
  & Basis Sharing & 7.74          & 15.03         & 0.66          & \textbf{0.36} & 0.71          & \textbf{0.28} & 0.66          & \textbf{0.25} & 0.46          & 0.56 \\
  & Swift-SVD     & \textbf{7.71} & \textbf{11.27}& \textbf{0.67} & 0.35          & \textbf{0.73} & \textbf{0.28} & \textbf{0.68} & 0.24          & \textbf{0.47} & \textbf{0.57} \\
\cmidrule{1-12}
\multirow{2}{*}{0.6}
  & Basis Sharing & \textbf{12.39}& 41.28         & 0.52          & \textbf{0.27} & 0.62          & 0.22          & 0.61          & \textbf{0.23} & 0.35          & 0.47 \\
  & Swift-SVD     & 12.69         & \textbf{22.10}& \textbf{0.53} & 0.25          & \textbf{0.63} & \textbf{0.23} & \textbf{0.63} & \textbf{0.23} & \textbf{0.36} & \textbf{0.48} \\
\bottomrule
\end{tabular}
}
\end{table}

\subsection{Comparison on Qwen3-4B}
\label{app:qwen3}

We compare Swift-SVD with SVD-LLM(W) and Dobi-SVD(w/o) on Qwen3-4B, a recent architecture not covered by existing baselines in the main text. All results are training-free and obtained without any fine-tuning or post-compression training. MoDeGPT~\cite{che2024modegpt} is excluded as it does not natively support the Qwen3 architecture. Table~\ref{tab:qwen3_4b} reports zero-shot accuracy under both uniform and dynamic rank allocation. Baseline results are reported from their original papers.

For uniform compression, Swift-SVD outperforms SVD-LLM(W) and Dobi-SVD(w/o) at both compression ratios. For dynamic compression, Dobi-SVD(w) uses the training-based dynamic rank allocation from Dobi-SVD's original paper; Swift-SVD(C) follows the pure layer-wise loss-based strategy of SVD-LLM v2; Swift-SVD* jointly considers layer-wise loss and layer importance. Swift-SVD* consistently achieves the highest accuracy, and both Swift-SVD variants substantially outperform Dobi-SVD(w), which suffers from its training-based allocation on this architecture.

\begin{table}[h]
\centering
\caption{Zero-shot accuracy on Qwen3-4B. Baseline results are reported from their original papers. The Original row is highlighted with gray text.}
\label{tab:qwen3_4b}
\renewcommand{\arraystretch}{1.0}
\resizebox{0.65\columnwidth}{!}{%
\begin{tabular}{@{}c l ccccccc@{}}
\toprule
\textbf{Ratio} & \textbf{Method} & \textbf{ARC-e} & \textbf{PIQA} & \textbf{Openb.} & \textbf{WinoG.} & \textbf{HellaS.} & \textbf{MathQA} & \textbf{Avg} \\
\midrule
\textcolor{textgray}{1.0} & \textcolor{textgray}{Original} & \textcolor{textgray}{0.81} & \textcolor{textgray}{0.75} & \textcolor{textgray}{0.30} & \textcolor{textgray}{0.66} & \textcolor{textgray}{0.52} & \textcolor{textgray}{0.40} & \textcolor{textgray}{0.57} \\
\midrule
\multirow{6}{*}{\textbf{0.8}}
  & SVD-LLM(W)   & 0.64          & 0.62          & 0.24          & 0.57          & 0.40          & 0.27          & 0.46 \\
  & Dobi-SVD(w/o)& 0.62          & 0.61          & 0.23          & 0.55          & 0.39          & 0.26          & 0.44 \\
  & Dobi-SVD(w)  & 0.31          & 0.60          & 0.19          & 0.55          & 0.30          & 0.21          & 0.36 \\
\cdashline{2-8}
\addlinespace[2pt]
  & Swift-SVD    & 0.69          & 0.70          & 0.26          & 0.61          & 0.42          & 0.29          & 0.50 \\
  & Swift-SVD(C) & 0.69          & 0.69          & 0.26          & 0.60          & 0.41          & 0.27          & 0.49 \\
\rowcolor{mygray}
\cellcolor{white} & \textbf{Swift-SVD*} & \textbf{0.71} & \textbf{0.72} & \textbf{0.29} & \textbf{0.63} & \textbf{0.46} & \textbf{0.30} & \textbf{0.52} \\
\midrule
\multirow{6}{*}{\textbf{0.6}}
  & SVD-LLM(W)   & 0.41          & 0.56          & 0.18          & 0.43          & 0.29          & 0.19          & 0.34 \\
  & Dobi-SVD(w/o)& 0.40          & 0.54          & 0.17          & 0.39          & 0.28          & 0.18          & 0.33 \\
  & Dobi-SVD(w)  & 0.27          & 0.53          & 0.16          & 0.53          & 0.25          & 0.18          & 0.32 \\
\cdashline{2-8}
\addlinespace[2pt]
  & Swift-SVD    & 0.54          & 0.65          & 0.23          & 0.59          & 0.35          & 0.23          & 0.43 \\
  & Swift-SVD(C) & 0.54          & 0.65          & 0.20          & 0.59          & 0.33          & 0.20          & 0.42 \\
\rowcolor{mygray}
\cellcolor{white} & \textbf{Swift-SVD*} & \textbf{0.57} & \textbf{0.67} & \textbf{0.23} & \textbf{0.60} & \textbf{0.38} & \textbf{0.24} & \textbf{0.45} \\
\bottomrule
\end{tabular}
}
\end{table}

\subsection{Scalability to Larger Models}
\label{app:32b}

We evaluate Swift-SVD's scalability on Qwen-32B, a model significantly larger than those tested in the main text. All experiments are conducted on a single H800 (80GB) GPU. Under this hardware constraint, activation-aware baselines such as SVD-LLM and Dobi-SVD fail with OOM errors when compressing Qwen-32B, while Swift-SVD completes successfully---highlighting its memory efficiency advantage at scale. We therefore compare against naive SVD (direct low-rank decomposition without activation awareness) as a baseline. \citet{sundrani2025lowrank} do not release source code, precluding direct comparison.

Table~\ref{tab:qwen32b} reports perplexity and zero-shot accuracy results. Swift-SVD consistently outperforms naive SVD by a large margin across all compression ratios, demonstrating the importance of activation awareness at the 32B scale. Notably, at compression ratio 0.8, Swift-SVD improves average accuracy from 0.42 (uncompressed) to 0.59. This improvement under moderate compression is not observed in smaller ($\leq$8B) models and may reflect a mild regularization effect in more overparameterized models, consistent with prior observations~\citep{sundrani2025lowrank}.

\begin{table}[h]
\centering
\caption{Perplexity and zero-shot accuracy of Swift-SVD on Qwen-32B (single H800, 80GB). Naive SVD denotes direct low-rank decomposition without activation awareness. PPL results for Naive SVD are omitted as they are too large to be informative. The Original row is highlighted with gray text.}
\label{tab:qwen32b}
\renewcommand{\arraystretch}{1.0}
\resizebox{0.8\columnwidth}{!}{%
\begin{tabular}{@{}c l cc cccccccc@{}}
\toprule
\multirow{2}{*}{\textbf{Ratio}} & \multirow{2}{*}{\textbf{Method}}
  & \multicolumn{2}{c}{\textbf{PPL} $\downarrow$}
  & \multicolumn{8}{c}{\textbf{Accuracy} $\uparrow$} \\
\cmidrule(lr){3-4} \cmidrule(lr){5-12}
& & \textbf{Wiki} & \textbf{C4}
  & \textbf{ARC-e} & \textbf{ARC-c} & \textbf{PIQA} & \textbf{Openb.} & \textbf{WinoG.} & \textbf{MathQA} & \textbf{HellaS.} & \textbf{Avg} \\
\midrule
\textcolor{textgray}{1.0} & \textcolor{textgray}{Original}
  & \textcolor{textgray}{7.60} & \textcolor{textgray}{12.40}
  & \textcolor{textgray}{0.25} & \textcolor{textgray}{0.23} & \textcolor{textgray}{0.50} & \textcolor{textgray}{0.36} & \textcolor{textgray}{0.73} & \textcolor{textgray}{0.21} & \textcolor{textgray}{0.64} & \textcolor{textgray}{0.42} \\
\cmidrule{1-12}
\multirow{2}{*}{0.8}
  & Naive SVD  & ---           & ---           & 0.27          & 0.23          & 0.60          & 0.21          & 0.51          & 0.21          & 0.30          & 0.33 \\
  & Swift-SVD  & \textbf{9.38} & \textbf{15.45}& \textbf{0.79} & \textbf{0.49} & \textbf{0.78} & \textbf{0.36} & \textbf{0.73} & \textbf{0.43} & \textbf{0.58} & \textbf{0.59} \\
\cmidrule{1-12}
\multirow{2}{*}{0.6}
  & Naive SVD  & ---           & ---           & 0.25          & 0.22          & 0.55          & 0.16          & 0.50          & 0.19          & 0.26          & 0.30 \\
  & Swift-SVD  & \textbf{12.59}& \textbf{22.99}& \textbf{0.73} & \textbf{0.42} & \textbf{0.74} & \textbf{0.29} & \textbf{0.72} & \textbf{0.30} & \textbf{0.49} & \textbf{0.53} \\
\cmidrule{1-12}
\multirow{2}{*}{0.4}
  & Naive SVD  & ---           & ---           & 0.24          & 0.20          & 0.52          & 0.15          & 0.41          & 0.18          & 0.25          & 0.28 \\
  & Swift-SVD  & \textbf{23.90}& \textbf{56.79}& \textbf{0.52} & \textbf{0.26} & \textbf{0.67} & \textbf{0.21} & \textbf{0.64} & \textbf{0.23} & \textbf{0.37} & \textbf{0.41} \\
\bottomrule
\end{tabular}
}
\end{table}

\subsection{Singular Value Distribution}
\label{app:svd_dist}

As visualized in Figure~\ref{fig:svd_distribution}, the singular value distribution of both Key and Value modules exhibits a pronounced spectral disparity. Even on a logarithmic scale, we observe a massive magnitude gap: the dominant singular values reach up to $10^5$, while the median values hover significantly lower. Crucially, this extreme unevenness renders rank allocation methods based solely on Frobenius norm minimization ineffective. Since the Frobenius loss is disproportionately sensitive to large singular values, such methods suffer from severe numerical bias, often assigning insufficient ranks to layers with smaller spectral norms. Furthermore, our empirical analysis reveals a high negative correlation between layer-wise compressibility and importance. These observations motivate our proposed dynamic allocation strategy.

\begin{figure}[h]
    \centering
    \includegraphics[width=1\linewidth]{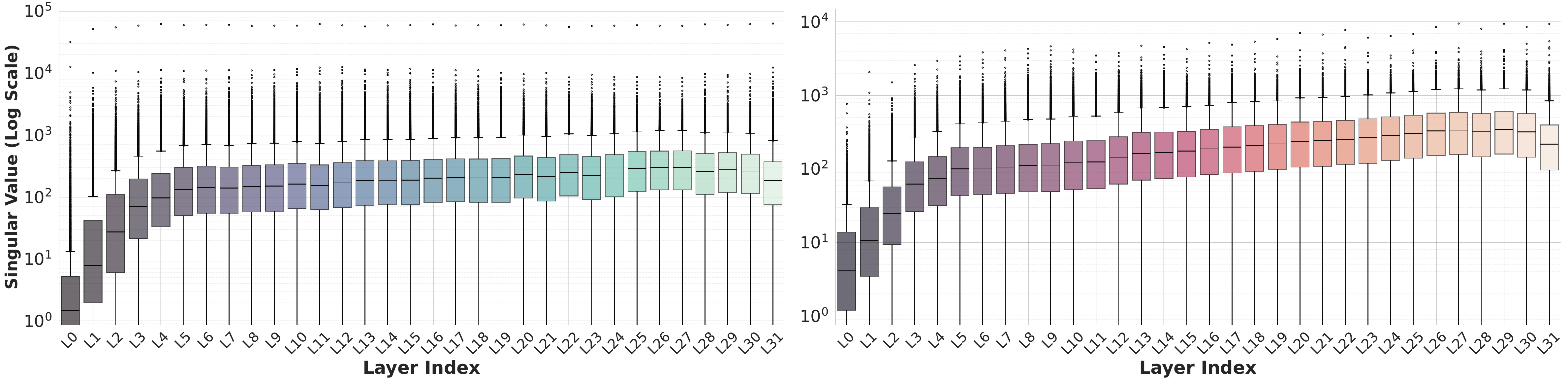}
    \caption{Singular value distribution of Key (left) and Value (right) modules in Llama-7B. The y-axis is presented on a logarithmic scale to visualize the magnitude differences across layers.}
    \label{fig:svd_distribution}
\end{figure}

\subsection{Throughput}
\label{sec:throughput_batch}

We further evaluate the system performance across varying batch sizes as shown in Figure~\ref{fig:throughput_batch}. The experimental results demonstrate that as the compression ratio increases, our method enhances inference efficiency by simultaneously boosting throughput and alleviating HBM pressure.

\begin{figure}[h]
    \centering
    \begin{subfigure}[b]{0.49\linewidth}
        \centering
        \includegraphics[width=\linewidth]{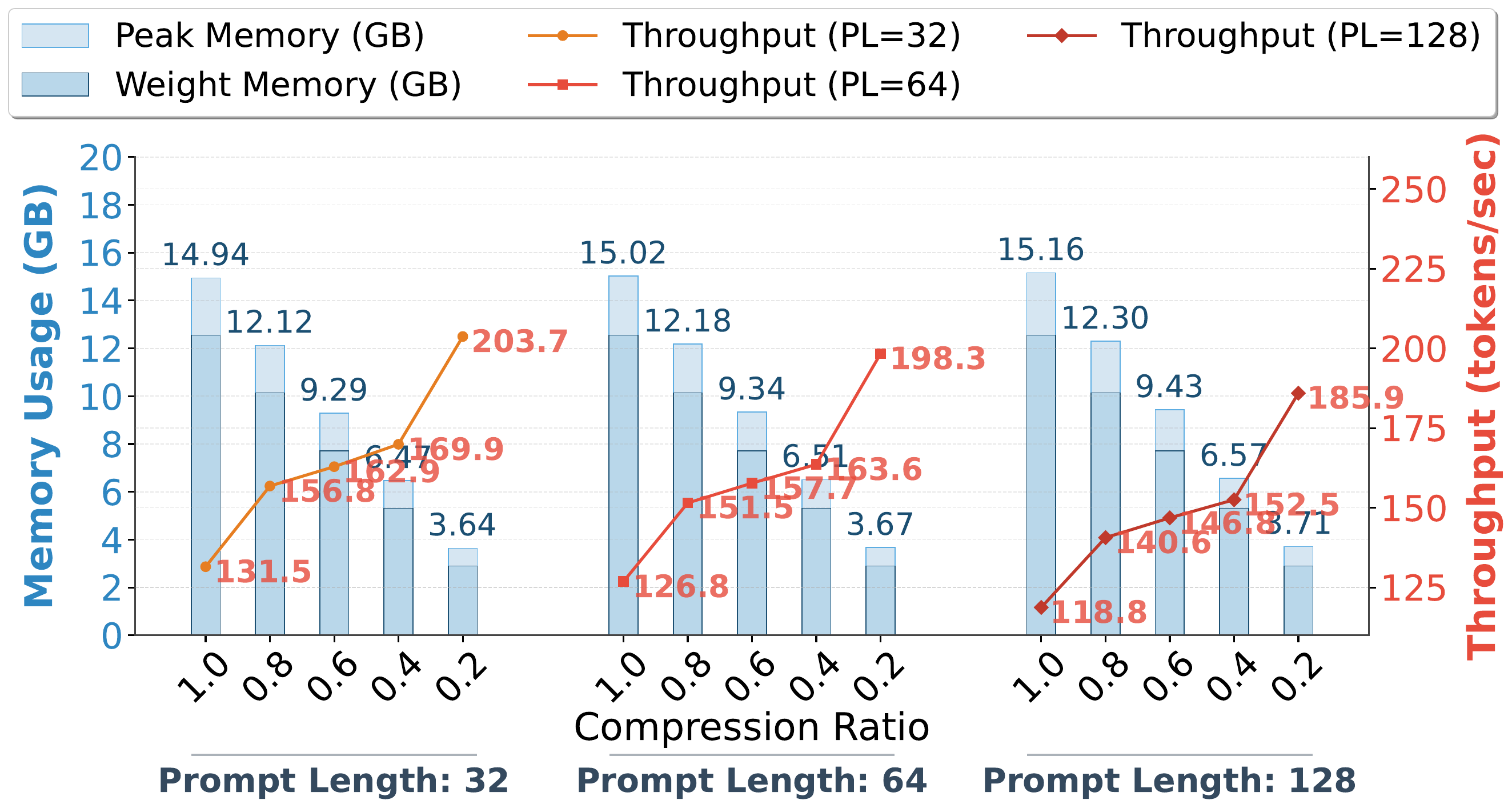}
        \caption{Batch Size 8}
        \label{fig:throughput_8}
    \end{subfigure}
    \hfill
    \begin{subfigure}[b]{0.49\linewidth}
        \centering
        \includegraphics[width=\linewidth]{figures/throughput_2.pdf}
        \caption{Batch Size 16}
        \label{fig:throughput_16}
    \end{subfigure}
    
    \caption{Throughput improvement and memory efficiency under different batch sizes. The generated sequence length is 1024.}
    \label{fig:throughput_batch}
\end{figure}

\subsection{Serving Throughput Comparison Across Methods}
\label{sec:throughput_comparison}

We compare serving throughput (tokens/sec) across SVD-based compression methods under both uniform and dynamic rank allocation on LLaMA-2-7B, conducted on an H800 GPU with batch size 16, prompt length 32, and output token length 1024.
\textbf{SVD-LLM(W)}, \textbf{Dobi-SVD(w/o)}, and \textbf{Swift-SVD} denote uniform compression of the corresponding methods.
\textbf{Swift-SVD*} denotes dynamic compression that jointly considers layer-wise loss and layer importance (our full method).
\textbf{Swift-SVD(C)} follows the pure loss-based dynamic strategy of SVD-LLM V2.
\textbf{Dobi-SVD(w)} uses the training-based rank allocation from Dobi-SVD's official repository.
Results are shown in Table~\ref{tab:throughput_combined}. Three observations emerge.

\textbf{Uniform compression.}
SVD-LLM(W), Dobi-SVD(w/o), and Swift-SVD achieve nearly identical throughput at the same compression ratio (left block of Table~\ref{tab:throughput_combined}), confirming that equal compression ratio implies equal throughput under uniform rank allocation, where all layers share the same rank and thus identical GEMM dimensions.

\textbf{Dynamic compression --- Swift-SVD.}
Swift-SVD* and Swift-SVD(C) retain throughput close to their respective uniform baselines (right block of Table~\ref{tab:throughput_combined}), with no significant degradation observed.
One possible explanation is that, to achieve good compression quality, dynamic allocation tends not to deviate dramatically from uniform ranks, so the variation in GEMM dimensions may not be large enough to cause observable throughput differences in practice.

\textbf{Dynamic compression --- Dobi-SVD(w).}
Dobi-SVD(w) exhibits substantially lower throughput than its uniform counterpart (right block of Table~\ref{tab:throughput_combined}).
We attribute this to its \emph{inter-type} rank allocation, which may assign higher ranks to \texttt{k\_proj} and \texttt{v\_proj} at the expense of other projection types, enlarging KV cache dimensions and increasing memory traffic per decoding step---directly degrading throughput in the memory-bandwidth-bound decoding phase.
Table~\ref{tab:dobisvd_ranks} shows the per-type average ranks from Dobi-SVD(w)'s allocation. For example, at ratio 0.8 the uniform rank for Q/K/V is 1638, yet the mean ranks of \texttt{k\_proj} (1661) and \texttt{v\_proj} (1743) both exceed this baseline.
To validate this hypothesis, we compress LLaMA-2-7B using the per-type average ranks from Dobi-SVD(w)'s allocation and measure the resulting throughput (reported in Table~\ref{tab:dobisvd_ranks}), which closely matches Dobi-SVD(w) itself (Table~\ref{tab:throughput_combined}: 104.4 / 117.5 / 131.6 tokens/sec at ratios 0.8/0.6/0.4).

\begin{table}[h]
\centering
\caption{Serving throughput (tokens/sec) on LLaMA-2-7B (H800) under uniform and dynamic rank allocation.}
\label{tab:throughput_combined}
\small
\begin{tabular}{c ccc ccc}
\toprule
& \multicolumn{3}{c}{\textbf{Uniform Compression}} & \multicolumn{3}{c}{\textbf{Dynamic Compression}} \\
\cmidrule(lr){2-4} \cmidrule(lr){5-7}
Ratio & Swift-SVD & SVD-LLM(W) & Dobi-SVD(w/o) & Swift-SVD* & Swift-SVD(C) & Dobi-SVD(w) \\
\midrule
0.8 & 192.9 & 192.7 & 193.3 & 193.7 & 191.4 & 104.4 \\
0.6 & 209.2 & 208.1 & 207.8 & 207.3 & 206.3 & 117.5 \\
0.4 & 216.8 & 216.5 & 216.8 & 216.0 & 215.4 & 131.6 \\
\bottomrule
\end{tabular}
\end{table}

\begin{table}[h]
\centering
\caption{Per-type average ranks from Dobi-SVD(w)'s dynamic allocation on LLaMA-2-7B, alongside the uniform rank baseline. The Throughput column reports serving throughput when compressing with these per-type average ranks, confirming that the enlarged \texttt{k\_proj} and \texttt{v\_proj} ranks are the primary cause of throughput degradation.}
\label{tab:dobisvd_ranks}
\small
\begin{tabular}{c c ccccccc c}
\toprule
Ratio & Uniform rank & q\_proj & k\_proj & v\_proj & o\_proj & gate\_proj & up\_proj & down\_proj & Throughput \\
\midrule
0.8 & 1638 & 1627 & 1661 & 1743 & 1588 & 2365 & 2379 & 2385 & 101.3 \\
0.6 & 1229 & 1213 & 1230 & 1294 & 1189 & 1773 & 1781 & 1812 & 113.4 \\
0.4 & 819  & 794  & 775  & 839  & 773  & 1207 & 1206 & 1227 & 130.8 \\
\bottomrule
\end{tabular}
\end{table}

\subsection{Throughput on Longer Generation Tasks}
\label{sec:throughput_long}

A potential concern with KV cache compression is that up-projecting all $L$ cached KV latents at each decoding step may introduce significant overhead as sequence length grows.
We address this by noting that the up-projection matrix has fixed dimensions, so the additional computational cost per cached token is constant---the total overhead per decoding step scales as $\mathcal{O}(L)$, not $\mathcal{O}(L^2)$.
Crucially, the memory bandwidth savings from reading the compressed KV cache also scale as $\mathcal{O}(L)$, so the throughput gain of the compressed model is preserved at longer sequence lengths.

To verify this empirically, we measure throughput on Mistral-7B at generation lengths of 1024 and 8192 tokens on an H800 GPU (batch size 16, prompt length 32). Longer sequences exceed the memory capacity of RTX 5090 due to the large total KV cache size, which is why we use H800 here.
As shown in Table~\ref{tab:throughput_long}, token throughput decreases as sequence length grows---an inherent property of autoregressive decoding, where longer generation requires reading increasingly large KV caches during attention computation, amplifying the memory-bandwidth bottleneck.
This behavior is consistent across both compressed and uncompressed models.
Importantly, the throughput gain at length 8192 is comparable to or exceeds that at length 1024 for each compression ratio, demonstrating that Swift-SVD remains practically efficient for longer generation tasks.

\begin{table}[h]
\centering
\caption{Throughput (tokens/sec) and gain over uncompressed baseline on Mistral-7B (H800, bs=16, prompt length 32) at generation lengths 1024 and 8192.}
\label{tab:throughput_long}
\small
\begin{tabular}{c cc cc}
\toprule
& \multicolumn{2}{c}{\textbf{Length 1024}} & \multicolumn{2}{c}{\textbf{Length 8192}} \\
\cmidrule(lr){2-3} \cmidrule(lr){4-5}
Ratio & Throughput & Gain & Throughput & Gain \\
\midrule
1.0 & 438.86 & ---    & 140.39 & ---    \\
0.8 & 457.31 & $+$4\% & 144.98 & $+$3\% \\
0.6 & 473.38 & $+$8\% & 169.17 & $+$20\% \\
0.4 & 497.29 & $+$12\% & 181.23 & $+$29\% \\
0.2 & 551.04 & $+$26\% & 187.45 & $+$34\% \\
\bottomrule
\end{tabular}
\end{table}

\subsection{Layer-wise NER and Importance Results}
\label{append.NER_importance}
\begin{figure}[h]
    \centering

    
    \begin{subfigure}[b]{0.48\linewidth}
        \centering
        \includegraphics[width=\linewidth]{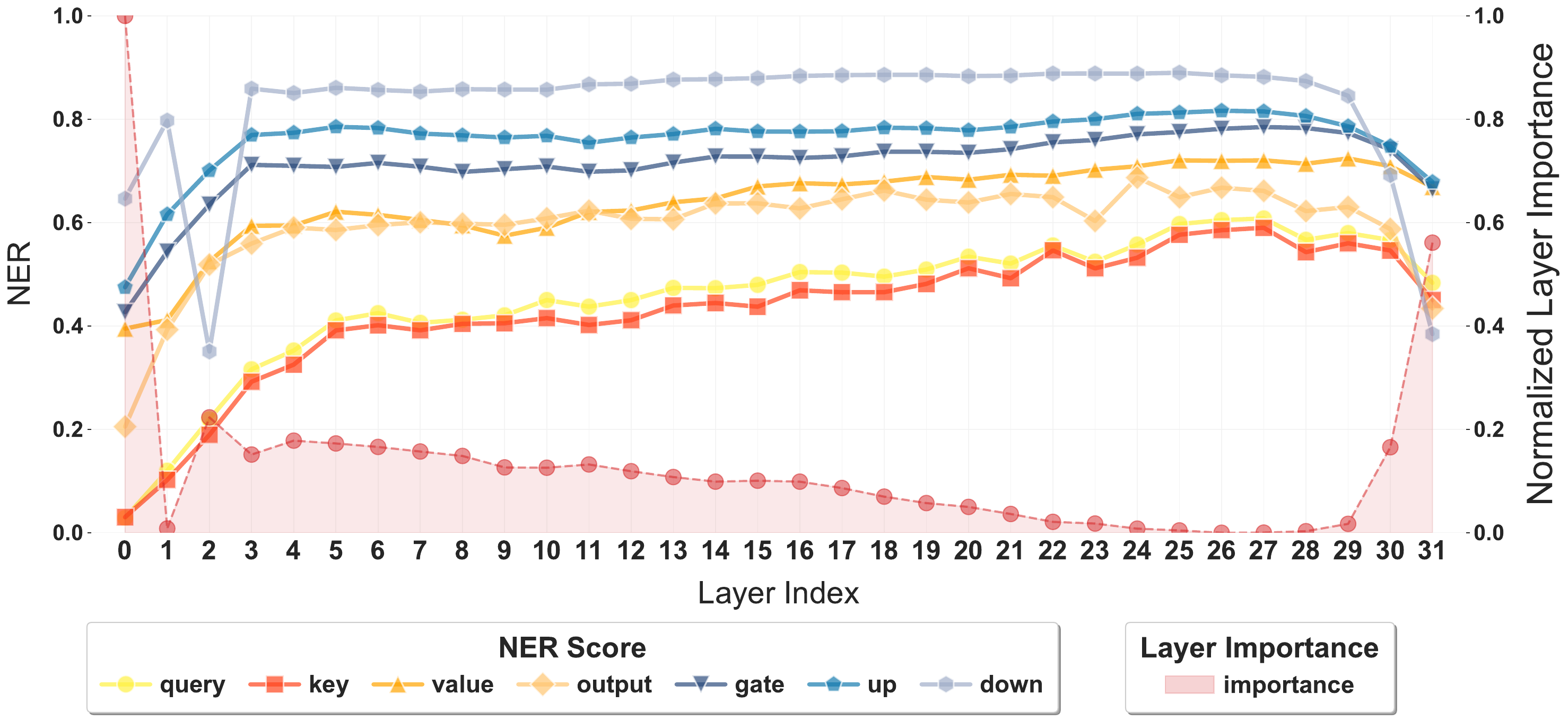}
        \caption{LlaMA-7B on C4}
    \end{subfigure}%
    \hfill
    \begin{subfigure}[b]{0.48\linewidth}
        \centering
        \includegraphics[width=\linewidth]{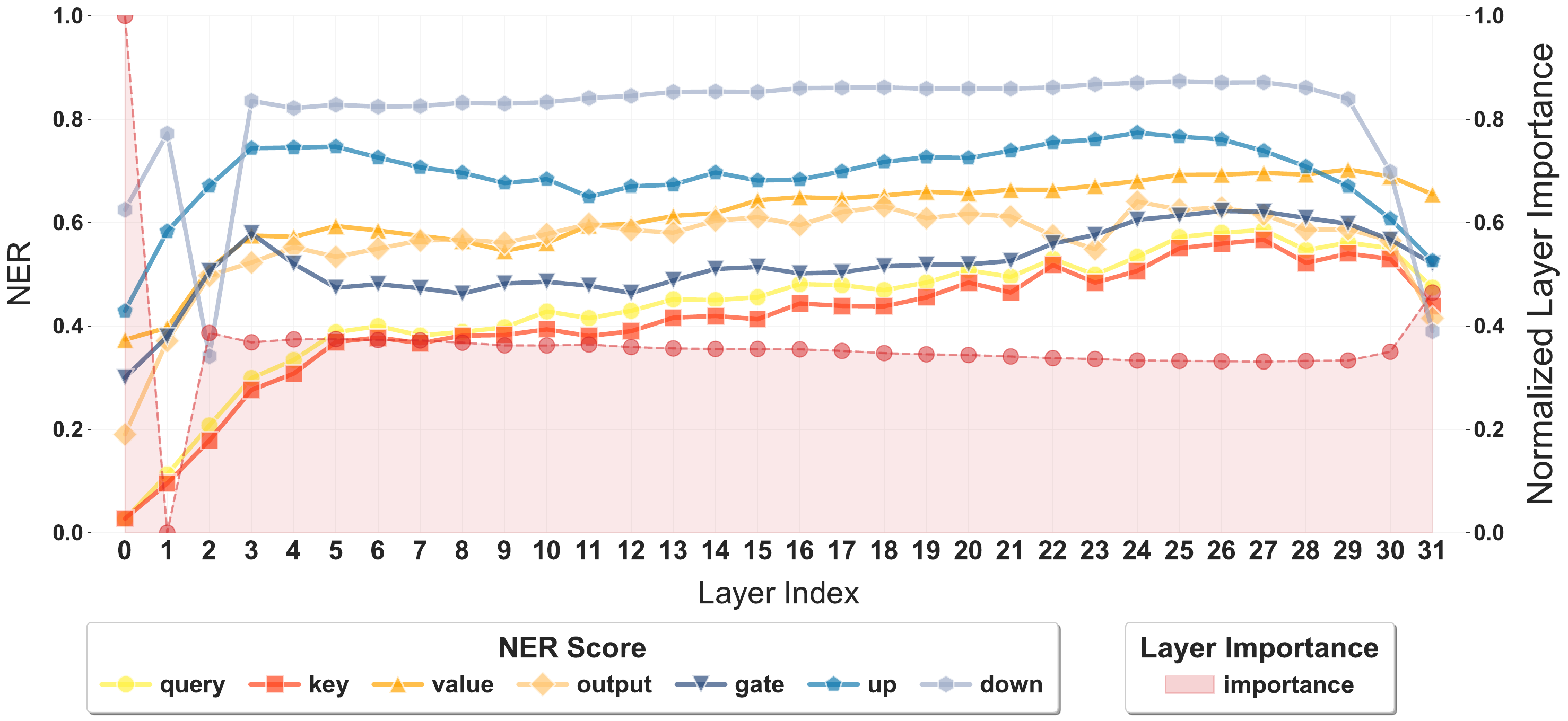}
        \caption{LlaMA-7B on WikiText-2}
    \end{subfigure}

    \vspace{1em}
    \begin{subfigure}[b]{0.48\linewidth}
        \centering
        \includegraphics[width=\linewidth]{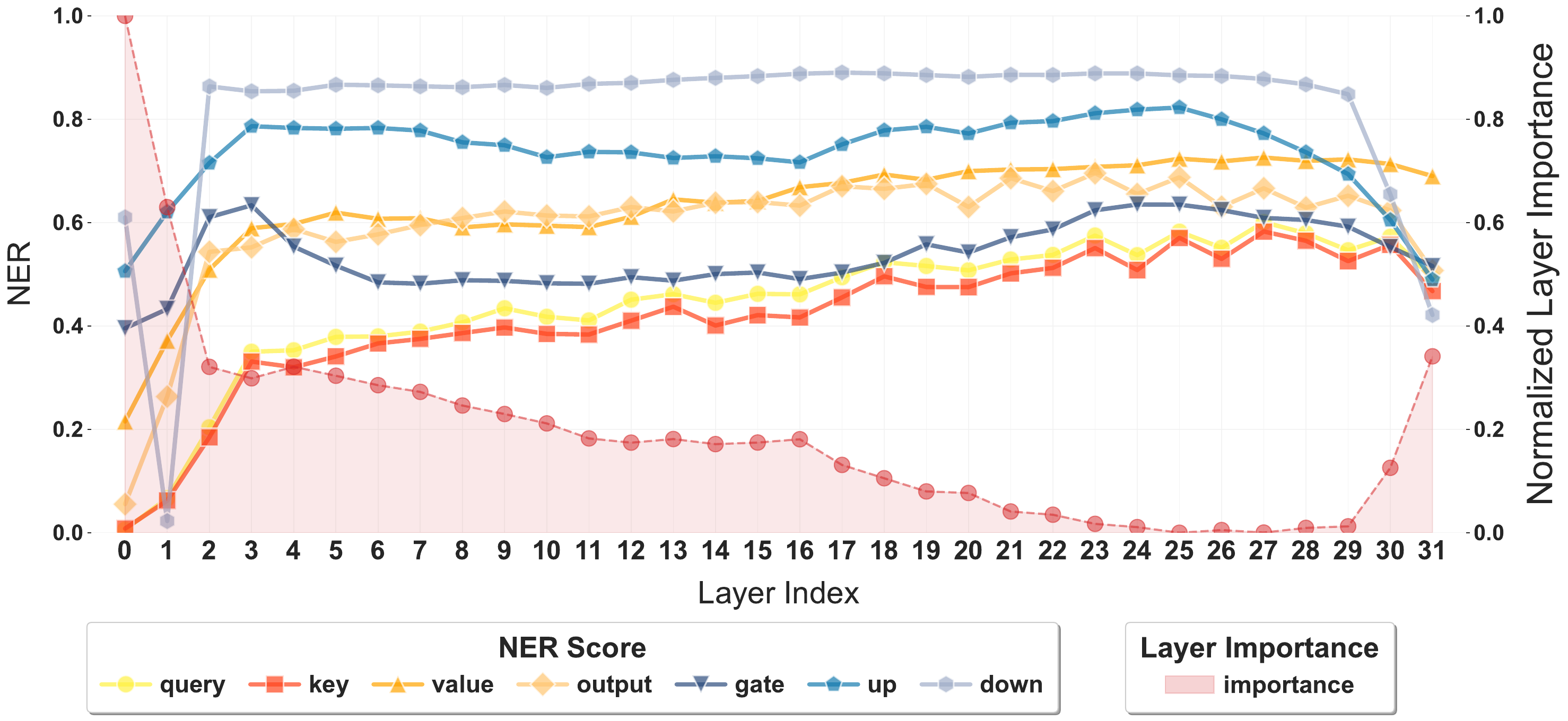}
        \caption{LlaMA2-7B on C4}
    \end{subfigure}%
    \hfill
    \begin{subfigure}[b]{0.48\linewidth}
        \centering
        \includegraphics[width=\linewidth]{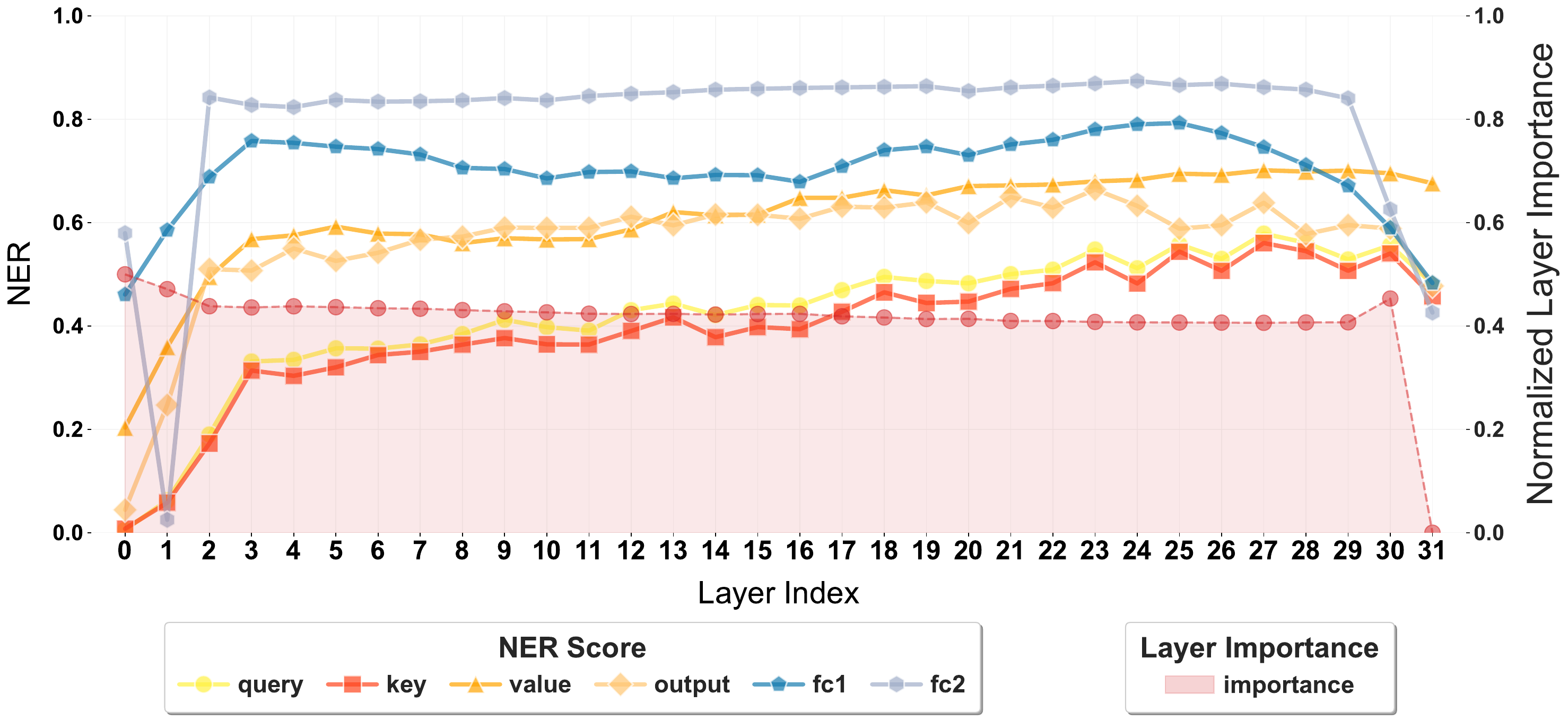}
        \caption{LlaMA2-7B on WikiText-2}
    \end{subfigure}

    \vspace{1em}
    \caption{Layer-wise NER and Importance comparison (Part I).}

\end{figure}


\begin{figure}[h]
    \ContinuedFloat
    \centering
    

    \begin{subfigure}[b]{0.48\linewidth}
        \centering
        \includegraphics[width=\linewidth]{figures/ner_importance/ner_importance_Mistral_C4.pdf}
        \caption{Mistral-7B on C4}
    \end{subfigure}%
    \hfill
    \begin{subfigure}[b]{0.48\linewidth}
        \centering
        \includegraphics[width=\linewidth]{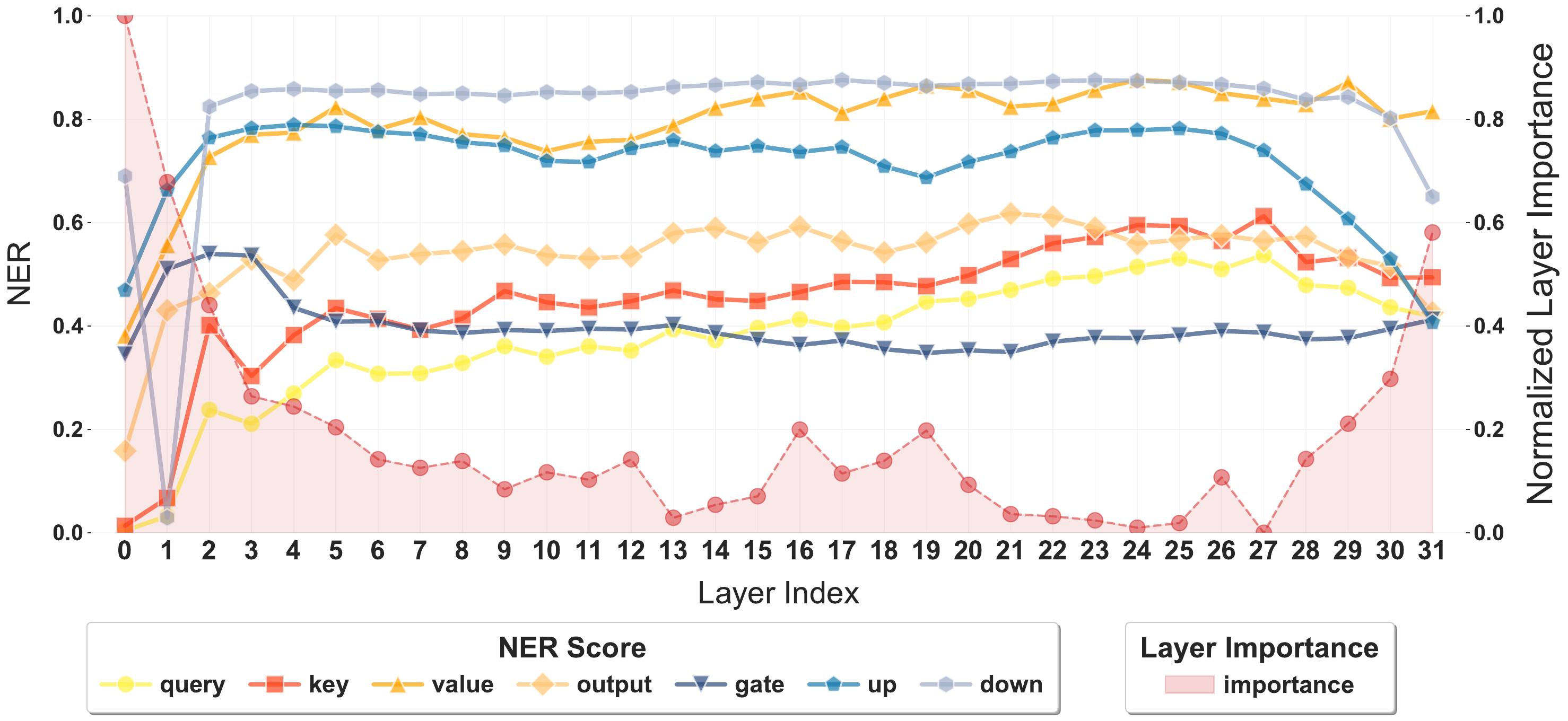}
        \caption{Mistral-7B on WikiText-2}
    \end{subfigure}

    \vspace{1em}

    \begin{subfigure}[b]{0.48\linewidth}
        \centering
        \includegraphics[width=\linewidth]{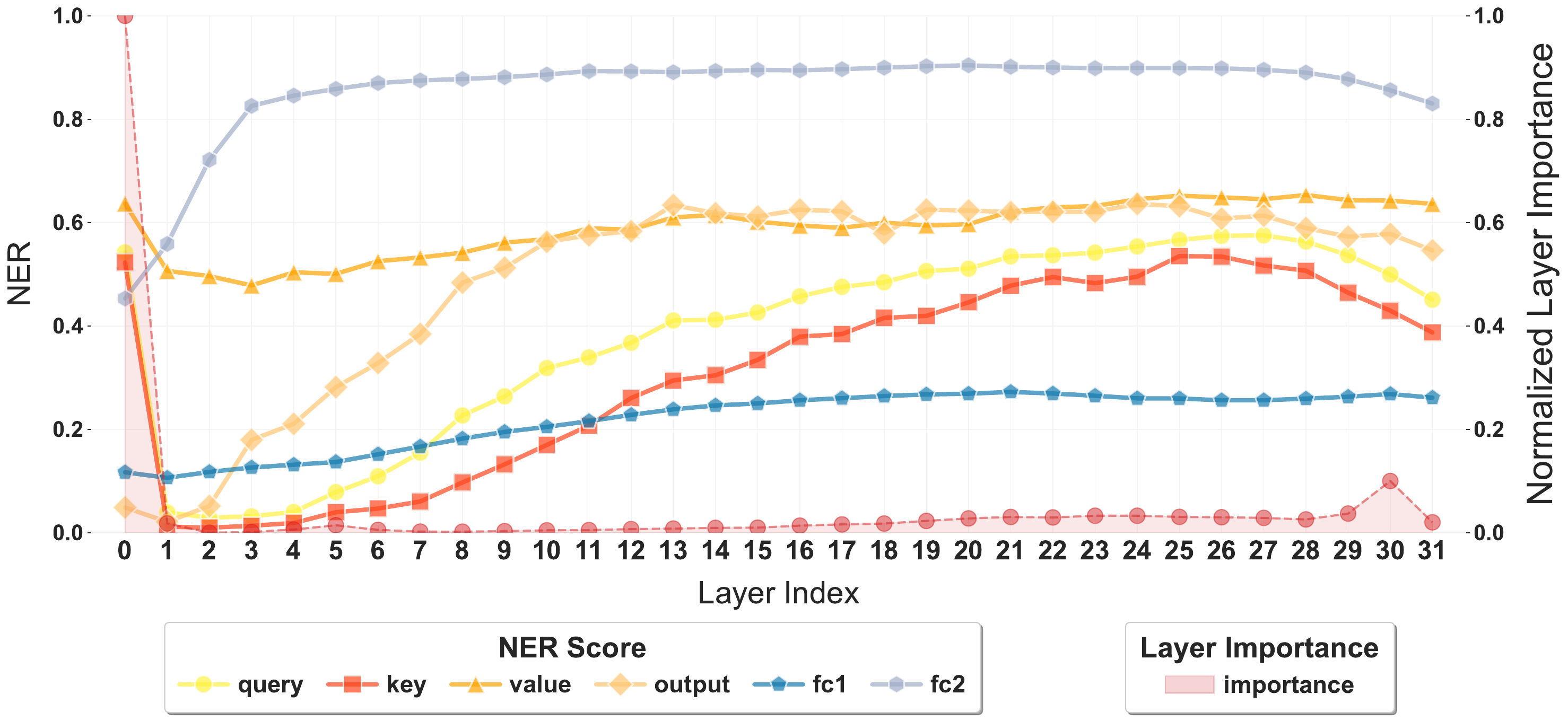}
        \caption{OPT-6.7B on C4}
    \end{subfigure}%
    \hfill
    \begin{subfigure}[b]{0.48\linewidth}
        \centering
        \includegraphics[width=\linewidth]{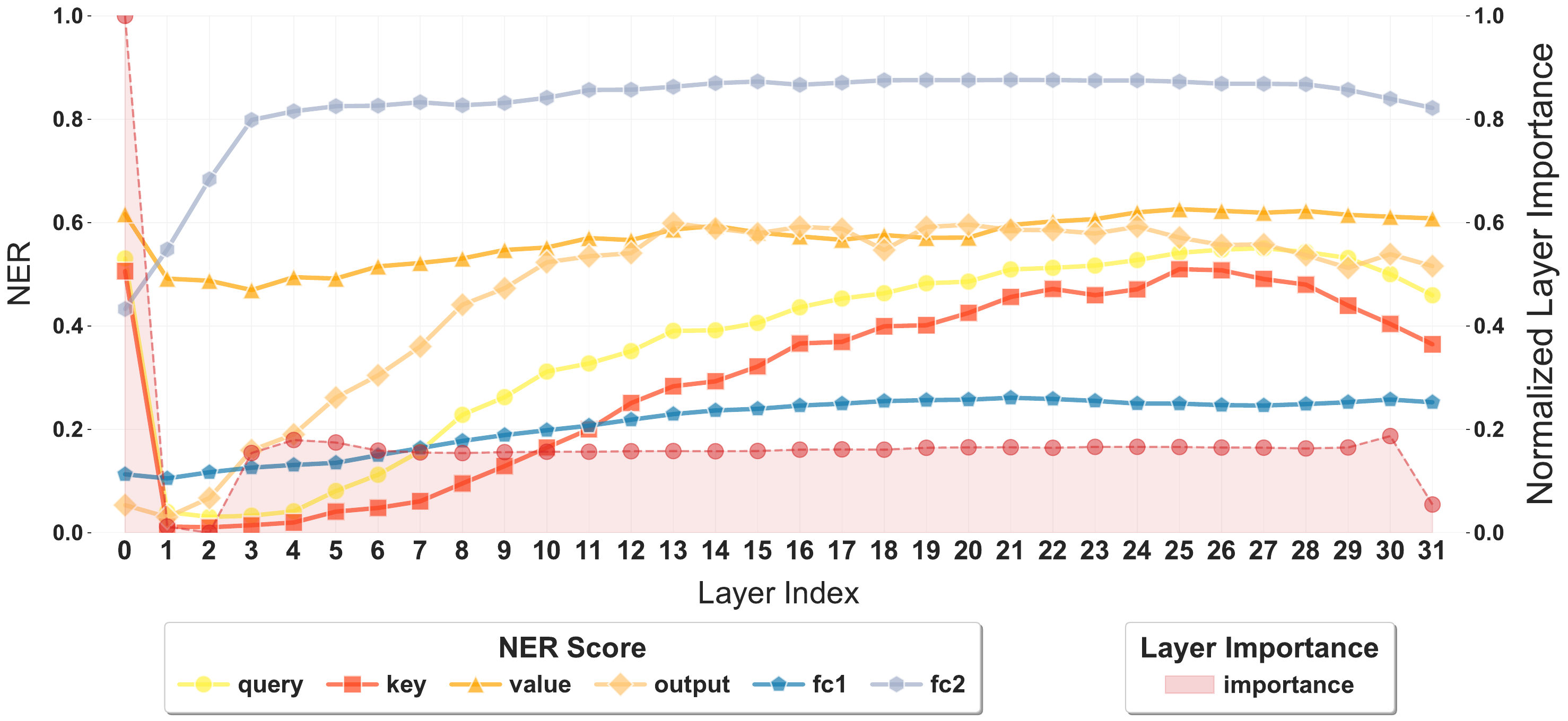}
        \caption{OPT-6.7B on WikiText-2}
    \end{subfigure}

    \vspace{1em}

    \begin{subfigure}[b]{0.48\linewidth}
        \centering
        \includegraphics[width=\linewidth]{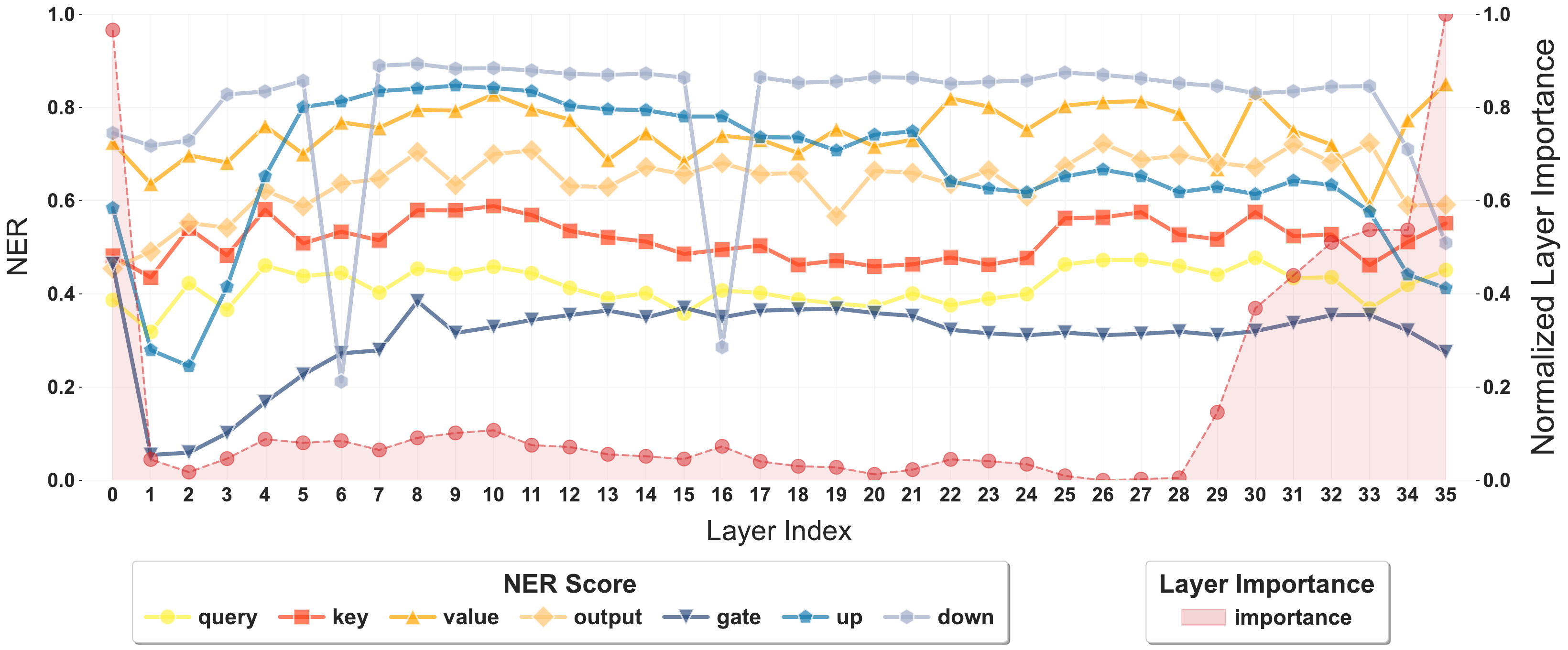}
        \caption{Qwen3-4B on C4}
    \end{subfigure}%
    \hfill
    \begin{subfigure}[b]{0.48\linewidth}
        \centering
        \includegraphics[width=\linewidth]{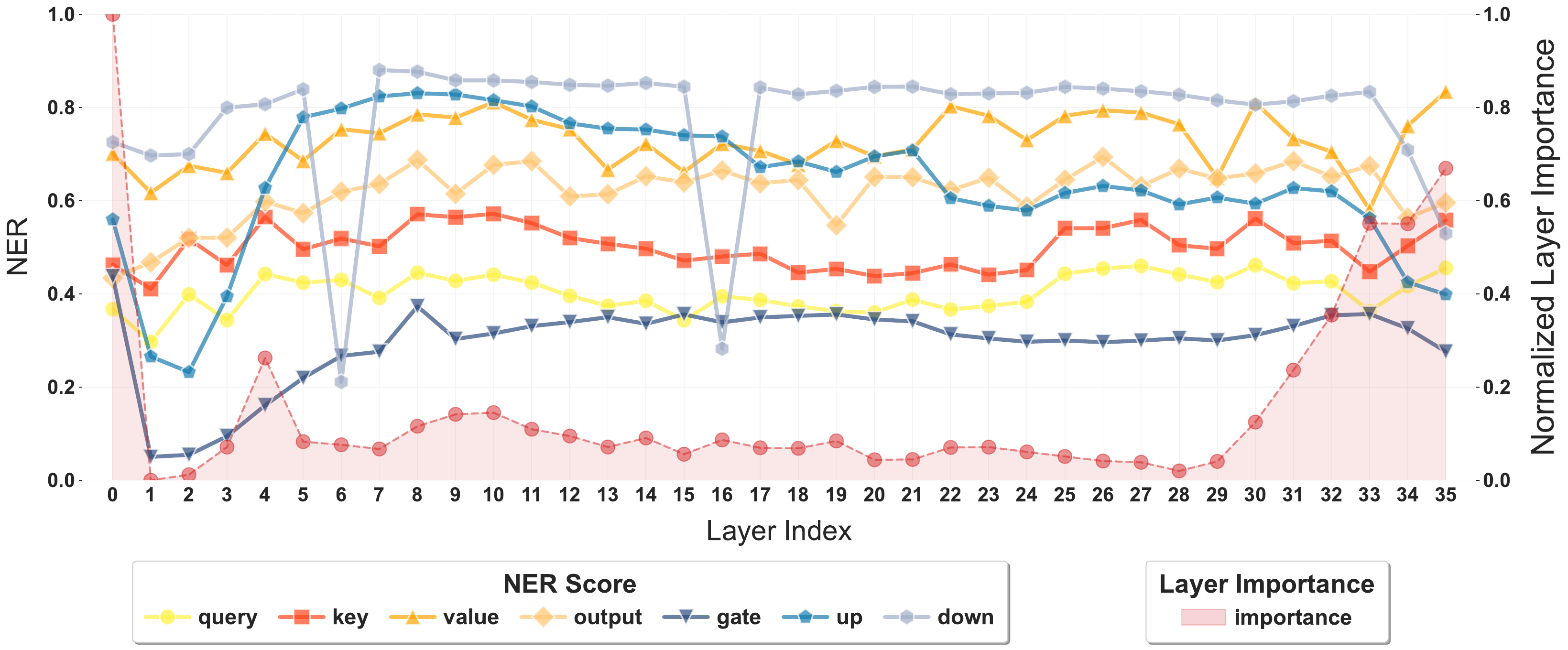}
        \caption{Qwen3-4B on WikiText-2}
    \end{subfigure}
    
    \vspace{1em}

    \begin{subfigure}[b]{0.48\linewidth}
        \centering
        \includegraphics[width=\linewidth]{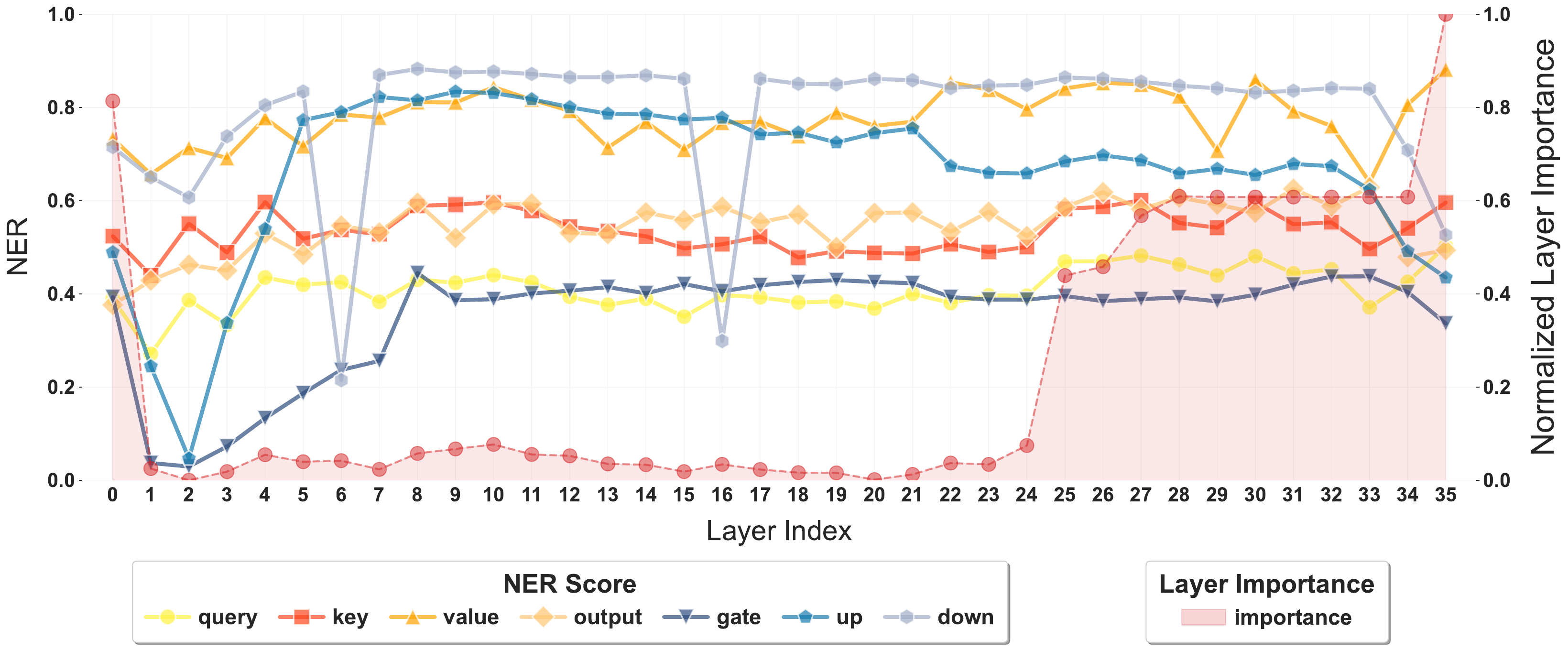}
        \caption{Qwen3-8B on C4}
    \end{subfigure}%
    \hfill
    \begin{subfigure}[b]{0.48\linewidth}
        \centering
        \includegraphics[width=\linewidth]{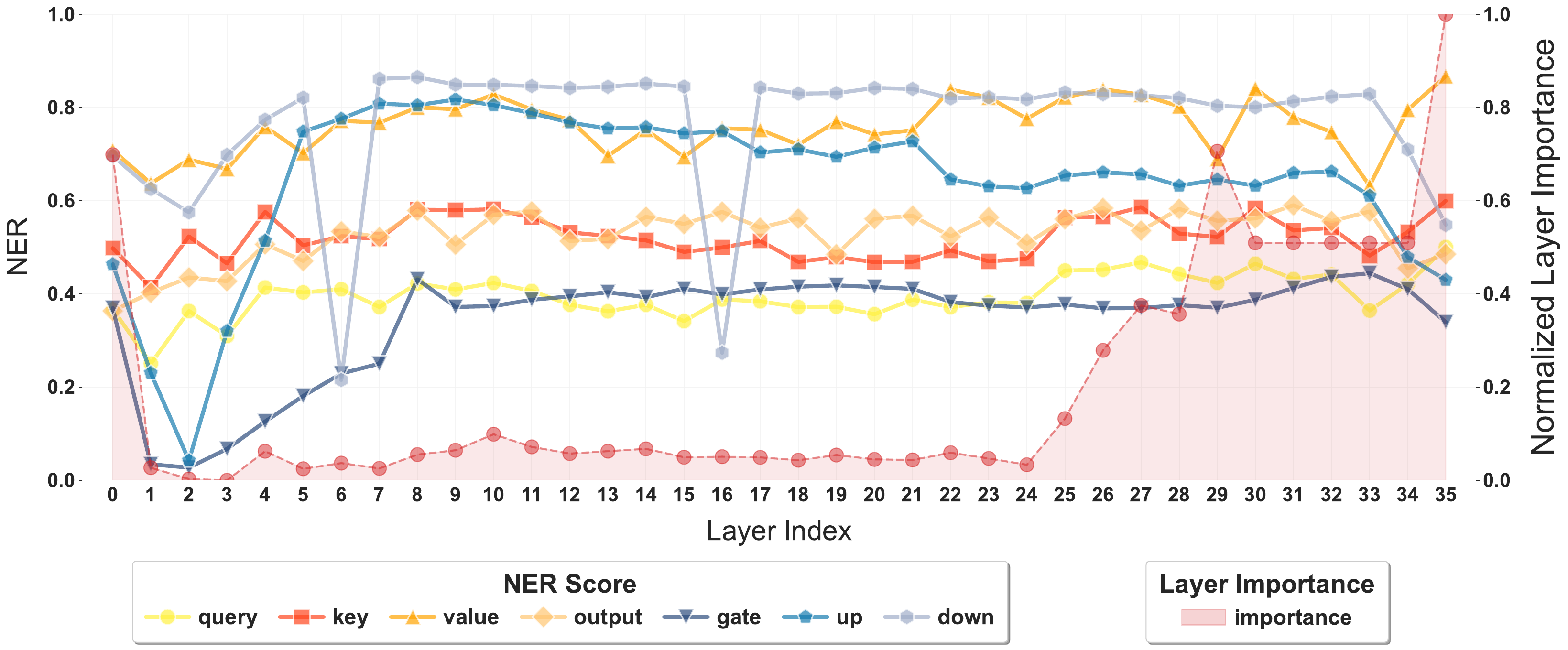}
        \caption{Qwen3-8B on WikiText-2}
    \end{subfigure}

    \vspace{1em}
    \caption{Layer-wise NER and Importance comparison (Part II).}
    \label{fig:ner_importance_comparison} 
\end{figure}

\subsection{Contents Generated From Compressed LLMs}
\label{app:qualitative}

\begin{table}[h]
    \centering
    \footnotesize 
    \renewcommand{\arraystretch}{1.2}
    \resizebox{\linewidth}{!}{%
        \begin{tabularx}{\textwidth}{l | X | X}
            \toprule
            \textbf{RATIO} & \textbf{WikiText-2} & \textbf{C4} \\
            \midrule
            Original 
            & \multicolumn{2}{p{0.88\textwidth}}{\textbf{What color is the sky on a clear day?} On a clear day, the sky appears blue due to the way sunlight scatters through the atmosphere, a phenomenon known as Rayleigh scattering.} \\
            \midrule
            
            0.8 
            & \textbf{What color is the sky on a clear day?} 
            Blue, green or white. The answer to this question should be blue. Because of the atmosphere and sunlight, it's blue. However some people might think that it's green because they are looking at trees in front of the sky. Others may say that it's white due to clouds. But actually it's blue. The same way as with other colors like red or yellow, the color of the sky can change depending on weather conditions.
            & \textbf{What color is the sky on a clear day?} 
            The sky is blue on a clear day. The sky appears blue because sunlight is scattered by the atmosphere. \\
            \midrule 
            
            0.6 
            & \textbf{What color is the sky on a clear day?} Blue or white. () in what colors are to be used for the sky. The sun and moon have their own light, but it's not that they can change into red or green, which appears as dark clouds when we observe them from high altitude. In this case of being able to see all skies, these colors would be so different with such great brightness. For example , you could imagine that some people might find something similar to the sky's color. I think perhaps the sky is blue."
     
            & \textbf{What color is the sky on a clear day?} The answer is blue. But this question might be confusing because it's not possible to see what colors are in the atmosphere when you look at it from Earth. So I need to find out what happens if there’s no clouds, and then how much light would reflect into the atmosphere? If we have some cloud cover over the earth and only sunlight comes through, that could make an effect of reflection by something like reflected sunlight or light being absorbed. \\
            \midrule
            
            0.4 
            & \textbf{What color is the sky on a clear day?} It has blue colors. The sky's white coloring in black, with yellow hues of purple that are gray. It s colored as red, and its colors have green tones which are dark. This landscape area is brown to some areas where they are not covered by the natural vegetation but they contain many small plants such trees and other species like plant wood. With these trees having a forest structure from their tree and being surrounded by various species including woodland.
     
            & \textbf{What color is the sky on a clear day?} Options: - A) white, b ) green , c ) blue , d ) yellow. 
            The answer will be blue. \\
            \midrule
            \midrule
            Original 
            & \multicolumn{2}{p{0.88\textwidth}}{\textbf{What is the responsibility of the AI assistant?} The responsibility of an AI assistant primarily lies in providing accurate, helpful, and ethical assistance to users. This includes adhering to guidelines, maintaining privacy, avoiding harmful or biased content, and ensuring that the information provided is reliable and up-to-date. Additionally, AI assistants should respect user autonomy, offer transparency about their capabilities and limitations, and avoid any form of deception or manipulation.} \\
            \midrule
            
            0.8 
            & \textbf{What is the responsibility of the AI assistant?} First, to ensure that its responses are accurate and reliable, in accordance with human standards. The second task of AI assistants is to be a helpful companion for humans. This means they can help people by providing information about their own lives, or helping them solve problems. In addition, AI assistance can serve as an aid to enhance intelligence through learning from data collected by other AI systems. And finally, AI assistants may also have responsibilities related to ethical issues such as privacy protection.
     
            & \textbf{What is the responsibility of the AI assistant?} The AI assistant is responsible for providing accurate information and helping users with their questions. \\
            \midrule 
            
            0.6 
            & \textbf{What is the responsibility of the AI assistant?} "In a way that would be useful for human beings. The AI assistant can help humans to find out how they are not making their own decisions, which may lead them to confusion and make errors. That is also helpful when people try to create new ideas or thoughts. It's possible that these actions could have been used by other people who want to take advantage with this assistance from AI assistants."
     
            & \textbf{What is the responsibility of the AI assistant?} The ai assistant must do all tasks required by people. In order to provide a good solution, they need to be responsible for what happens and can have an accurate response when it's needed. Answer: The Ai assistant should perform tasks that require human attention. It has responsibility for doing tasks that require humans' attention so as they will get proper answers with accuracy. \\
            \midrule
            
            0.4 
            & \textbf{What is the responsibility of the AI assistant?} The AI assistant serves as one agent who may serve both or roles within organizational activities, but not being affiliated to any organization. 
            & \textbf{What is the responsibility of the AI assistant?} The user has a total number of tasks, and they can be assigned with some tools. \\      
            
            \bottomrule
        \end{tabularx}%
    }
    \caption{Comparison of AI assistant responses across different datasets and compression ratios.}
\end{table}


\begin{thebibliography}{51}
\providecommand{\natexlab}[1]{#1}
\providecommand{\url}[1]{\texttt{#1}}
\expandafter\ifx\csname urlstyle\endcsname\relax
  \providecommand{\doi}[1]{doi: #1}\else
  \providecommand{\doi}{doi: \begingroup \urlstyle{rm}\Url}\fi

\bibitem[Amini et~al.(2019)Amini, Gabriel, Lin, Koncel-Kedziorski, Choi, and
  Hajishirzi]{amini-etal-2019-mathqa}
Amini, A., Gabriel, S., Lin, S., Koncel-Kedziorski, R., Choi, Y., and
  Hajishirzi, H.
\newblock {M}ath{QA}: Towards interpretable math word problem solving with
  operation-based formalisms.
\newblock In \emph{Proceedings of the 2019 Conference of the North {A}merican
  Chapter of the Association for Computational Linguistics: Human Language
  Technologies, Volume 1 (Long and Short Papers)}, pp.\  2357--2367,
  Minneapolis, Minnesota, June 2019. Association for Computational Linguistics.
\newblock \doi{10.18653/v1/N19-1245}.
\newblock URL \url{https://aclanthology.org/N19-1245}.

\bibitem[Ashkboos et~al.(2024)Ashkboos, Croci, do~Nascimento, Hoefler, and
  Hensman]{ashkboos2024slicegptcompresslargelanguage}
Ashkboos, S., Croci, M.~L., do~Nascimento, M.~G., Hoefler, T., and Hensman, J.
\newblock Slice{GPT}: Compress large language models by deleting rows and
  columns.
\newblock In \emph{The Twelfth International Conference on Learning
  Representations}, 2024.
\newblock URL \url{https://arxiv.org/abs/2401.15024}.

\bibitem[Bengio et~al.(2003)Bengio, Ducharme, Vincent, and
  Jauvin]{bengio2003neural}
Bengio, Y., Ducharme, R., Vincent, P., and Jauvin, C.
\newblock A neural probabilistic language model.
\newblock \emph{Journal of machine learning research}, 3\penalty0
  (Feb):\penalty0 1137--1155, 2003.

\bibitem[Bhojanapalli et~al.(2020)Bhojanapalli, Yun, Rawat, Reddi, and
  Kumar]{bhojanapalli2020low}
Bhojanapalli, S., Yun, C., Rawat, A.~S., Reddi, S., and Kumar, S.
\newblock Low-rank bottleneck in multi-head attention models.
\newblock In \emph{International conference on machine learning}, pp.\
  864--873. PMLR, 2020.

\bibitem[Bisk et~al.(2020)Bisk, Zellers, Bras, Gao, and Choi]{bisk2020}
Bisk, Y., Zellers, R., Bras, R.~L., Gao, J., and Choi, Y.
\newblock Piqa: Reasoning about physical commonsense in natural language.
\newblock In \emph{Thirty-Fourth AAAI Conference on Artificial Intelligence},
  2020.

\bibitem[Boix-Adsera et~al.(2023)Boix-Adsera, Littwin, Abbe, Bengio, and
  Susskind]{boix2023transformers}
Boix-Adsera, E., Littwin, E., Abbe, E., Bengio, S., and Susskind, J.
\newblock Transformers learn through gradual rank increase.
\newblock \emph{Advances in Neural Information Processing Systems},
  36:\penalty0 24519--24551, 2023.

\bibitem[Chang et~al.(2024)Chang, Lin, Lin, Chen, Hu, Wang, Huang, Ceze,
  Abdelfattah, and Wu]{chang2024palu}
Chang, C.-C., Lin, W.-C., Lin, C.-Y., Chen, C.-Y., Hu, Y.-F., Wang, P.-S.,
  Huang, N.-C., Ceze, L., Abdelfattah, M.~S., and Wu, K.-C.
\newblock Palu: Compressing {KV}-cache with low-rank projection.
\newblock In \emph{Advances in Neural Information Processing Systems}, 2024.

\bibitem[Che et~al.(2024)Che, Wang, and Wang]{che2024modegpt}
Che, R., Wang, Z., and Wang, Z.
\newblock Mo{D}e{GPT}: Modular decomposition for large language model
  compression.
\newblock In \emph{Advances in Neural Information Processing Systems}, 2024.

\bibitem[Chen et~al.(2021{\natexlab{a}})Chen, Dao, Winsor, Song, Rudra, and
  R{\'e}]{chen2021scatterbrain}
Chen, B., Dao, T., Winsor, E., Song, Z., Rudra, A., and R{\'e}, C.
\newblock Scatterbrain: Unifying sparse and low-rank attention.
\newblock \emph{Advances in Neural Information Processing Systems},
  34:\penalty0 17413--17426, 2021{\natexlab{a}}.

\bibitem[Chen et~al.()Chen, Wang, Qin, Li, Wang, Chen, Chen, Weng, and
  Liu]{chen2025towards}
Chen, J., Wang, Z., Qin, J., Li, M., Wang, M., Chen, C., Chen, Y., Weng, Q.,
  and Liu, Y.
\newblock Towards dynamic kv-cache compression: Fine-grained evaluation of key
  and value ranks in llms.
\newblock In \emph{NeurIPS 2025 Workshop on Evaluating the Evolving LLM
  Lifecycle: Benchmarks, Emergent Abilities, and Scaling}.

\bibitem[Chen et~al.(2026)Chen, Wang, Qin, Li, Wang, Chen, Chen, Weng, and
  Liu]{chen2026kv}
Chen, J., Wang, Z., Qin, J., Li, M., Wang, M., Chen, C., Chen, Y., Weng, Q.,
  and Liu, Y.
\newblock Kv-core: Benchmarking data-dependent low-rank compressibility of
  kv-caches in llms.
\newblock \emph{arXiv preprint arXiv:2602.05929}, 2026.

\bibitem[Chen et~al.(2021{\natexlab{b}})Chen, Yu, Dhillon, and
  Hsieh]{chen2021drone}
Chen, P., Yu, H.-F., Dhillon, I., and Hsieh, C.-J.
\newblock Drone: Data-aware low-rank compression for large nlp models.
\newblock \emph{Advances in neural information processing systems},
  34:\penalty0 29321--29334, 2021{\natexlab{b}}.

\bibitem[Clark et~al.(2018)Clark, Cowhey, Etzioni, Khot, Sabharwal, Schoenick,
  and Tafjord]{allenai:arc}
Clark, P., Cowhey, I., Etzioni, O., Khot, T., Sabharwal, A., Schoenick, C., and
  Tafjord, O.
\newblock Think you have solved question answering? try arc, the ai2 reasoning
  challenge.
\newblock \emph{arXiv:1803.05457v1}, 2018.

\bibitem[Dong et~al.(2021)Dong, Cordonnier, and Loukas]{dong2021attention}
Dong, Y., Cordonnier, J.-B., and Loukas, A.
\newblock Attention is not all you need: Pure attention loses rank doubly
  exponentially with depth.
\newblock In \emph{International conference on machine learning}, pp.\
  2793--2803. PMLR, 2021.

\bibitem[Eckart \& Young(1936)Eckart and Young]{eckart1936approximation}
Eckart, C. and Young, G.
\newblock The approximation of one matrix by another of lower rank.
\newblock \emph{Psychometrika}, 1\penalty0 (3):\penalty0 211--218, 1936.

\bibitem[Guo et~al.(2025)Guo, Chen, Tang, and Wang]{guo2025slimllm}
Guo, J., Chen, X., Tang, Y., and Wang, Y.
\newblock Slimllm: Accurate structured pruning for large language models.
\newblock \emph{arXiv preprint arXiv:2505.22689}, 2025.

\bibitem[Hajimolahoseini et~al.(2022)Hajimolahoseini, Ahmed, Rezagholizadeh,
  Partovinia, and Liu]{hajimolahoseini2022strategies}
Hajimolahoseini, H., Ahmed, W., Rezagholizadeh, M., Partovinia, V., and Liu, Y.
\newblock Strategies for applying low rank decomposition to transformer-based
  models.
\newblock In \emph{36th Conference on Neural Information Processing Systems
  (NeurIPS2022)}, volume~6, 2022.

\bibitem[Hsu et~al.(2022)Hsu, Hua, Chang, Lou, Shen, and
  Jin]{hsu2022languagemodelcompressionweighted}
Hsu, Y.-C., Hua, T., Chang, S., Lou, Q., Shen, Y., and Jin, H.
\newblock Language model compression with weighted low-rank factorization.
\newblock In \emph{The Tenth International Conference on Learning
  Representations}, 2022.
\newblock URL \url{https://openreview.net/forum?id=uPv9Y3gmAI5}.

\bibitem[Hu et~al.(2022)Hu, Shen, Wallis, Allen-Zhu, Li, Wang, Wang, Chen,
  et~al.]{hu2022lora}
Hu, E.~J., Shen, Y., Wallis, P., Allen-Zhu, Z., Li, Y., Wang, S., Wang, L.,
  Chen, W., et~al.
\newblock Lora: Low-rank adaptation of large language models.
\newblock \emph{ICLR}, 1\penalty0 (2):\penalty0 3, 2022.

\bibitem[Ji et~al.(2025)Ji, Guo, Wu, Guo, Shen, Chen, Qiu, Zhang, and
  Gui]{ji2025towards}
Ji, T., Guo, B., Wu, Y., Guo, Q., Shen, L., Chen, Z., Qiu, X., Zhang, Q., and
  Gui, T.
\newblock Towards economical inference: Enabling deepseek's multi-head latent
  attention in any transformer-based llms.
\newblock \emph{arXiv preprint arXiv:2502.14837}, 2025.

\bibitem[Jiang et~al.(2023)Jiang, Sablayrolles, Mensch, Bamford, Chaplot,
  de~las Casas, Bressand, Lengyel, Lample, Saulnier, Lavaud, Lachaux, Stock,
  Scao, Lavril, Wang, Lacroix, and Sayed]{jiang2023mistral7b}
Jiang, A.~Q., Sablayrolles, A., Mensch, A., Bamford, C., Chaplot, D.~S., de~las
  Casas, D., Bressand, F., Lengyel, G., Lample, G., Saulnier, L., Lavaud,
  L.~R., Lachaux, M.-A., Stock, P., Scao, T.~L., Lavril, T., Wang, T., Lacroix,
  T., and Sayed, W.~E.
\newblock Mistral 7b, 2023.
\newblock URL \url{https://arxiv.org/abs/2310.06825}.

\bibitem[Li et~al.(2023)Li, Yu, Zhang, Liang, He, Chen, and
  Zhao]{li2023losparse}
Li, Y., Yu, Y., Zhang, Q., Liang, C., He, P., Chen, W., and Zhao, T.
\newblock Losparse: Structured compression of large language models based on
  low-rank and sparse approximation.
\newblock In \emph{International Conference on Machine Learning}, pp.\
  20336--20350. PMLR, 2023.

\bibitem[Men et~al.(2024)Men, Xu, Zhang, Wang, Lin, Lu, Han, and
  Chen]{men2024shortgptlayerslargelanguage}
Men, X., Xu, M., Zhang, Q., Wang, B., Lin, H., Lu, Y., Han, X., and Chen, W.
\newblock Short{GPT}: Layers in large language models are more redundant than
  you expect.
\newblock In \emph{Proceedings of the 62nd Annual Meeting of the Association
  for Computational Linguistics}, 2024.
\newblock URL \url{https://arxiv.org/abs/2403.03853}.

\bibitem[Merity et~al.(2016)Merity, Xiong, Bradbury, and
  Socher]{merity2016pointersentinelmixturemodels}
Merity, S., Xiong, C., Bradbury, J., and Socher, R.
\newblock Pointer sentinel mixture models, 2016.
\newblock URL \url{https://arxiv.org/abs/1609.07843}.

\bibitem[Meyer(2023)]{meyer2023matrix}
Meyer, C.~D.
\newblock \emph{Matrix analysis and applied linear algebra}.
\newblock SIAM, 2023.

\bibitem[Mihaylov et~al.(2018)Mihaylov, Clark, Khot, and
  Sabharwal]{OpenBookQA2018}
Mihaylov, T., Clark, P., Khot, T., and Sabharwal, A.
\newblock Can a suit of armor conduct electricity? a new dataset for open book
  question answering.
\newblock In \emph{EMNLP}, 2018.

\bibitem[Noci et~al.(2022)Noci, Anagnostidis, Biggio, Orvieto, Singh, and
  Lucchi]{noci2022signal}
Noci, L., Anagnostidis, S., Biggio, L., Orvieto, A., Singh, S.~P., and Lucchi,
  A.
\newblock Signal propagation in transformers: Theoretical perspectives and the
  role of rank collapse.
\newblock \emph{Advances in Neural Information Processing Systems},
  35:\penalty0 27198--27211, 2022.

\bibitem[Ott et~al.(2019)Ott, Edunov, Baevski, Fan, Gross, Ng, Grangier, and
  Auli]{ott2019fairseq}
Ott, M., Edunov, S., Baevski, A., Fan, A., Gross, S., Ng, N., Grangier, D., and
  Auli, M.
\newblock fairseq: A fast, extensible toolkit for sequence modeling.
\newblock \emph{arXiv preprint arXiv:1904.01038}, 2019.

\bibitem[Qinsi et~al.(2025)Qinsi, Ke, Tomizuka, Keutzer, and
  Xu]{qinsi2025dobisvd}
Qinsi, W., Ke, J., Tomizuka, M., Keutzer, K., and Xu, C.
\newblock Dobi-{SVD}: Differentiable {SVD} for {LLM} compression and some new
  perspectives.
\newblock In \emph{The Thirteenth International Conference on Learning
  Representations}, 2025.
\newblock URL \url{https://openreview.net/forum?id=kws76i5XB8}.

\bibitem[Raffel et~al.(2023)Raffel, Shazeer, Roberts, Lee, Narang, Matena,
  Zhou, Li, and Liu]{raffel2023exploringlimitstransferlearning}
Raffel, C., Shazeer, N., Roberts, A., Lee, K., Narang, S., Matena, M., Zhou,
  Y., Li, W., and Liu, P.~J.
\newblock Exploring the limits of transfer learning with a unified text-to-text
  transformer, 2023.
\newblock URL \url{https://arxiv.org/abs/1910.10683}.

\bibitem[Rhee et~al.(2025)Rhee, Sim, Ahn, Lee, Yoon, Kim, Park, Joo, and
  Kim]{rhee2025hpu}
Rhee, M., Sim, J., Ahn, T., Lee, S., Yoon, D., Kim, E., Park, K., Joo, Y., and
  Kim, H.
\newblock Hpu: High-bandwidth processing unit for scalable, cost-effective llm
  inference via gpu co-processing.
\newblock \emph{arXiv preprint arXiv:2504.16112}, 2025.

\bibitem[Roy \& Vetterli(2007)Roy and Vetterli]{roy2007effective}
Roy, O. and Vetterli, M.
\newblock The effective rank: A measure of effective dimensionality.
\newblock In \emph{2007 15th European signal processing conference}, pp.\
  606--610. IEEE, 2007.

\bibitem[Sakaguchi et~al.(2019)Sakaguchi, Bras, Bhagavatula, and
  Choi]{sakaguchi2019winogrande}
Sakaguchi, K., Bras, R.~L., Bhagavatula, C., and Choi, Y.
\newblock Winogrande: An adversarial winograd schema challenge at scale.
\newblock \emph{arXiv preprint arXiv:1907.10641}, 2019.

\bibitem[Shi et~al.(2025)Shi, Lu, Dong, Zhang, Zhang, Feng, and
  Wu]{shi2025understandinglayersignificancellm}
Shi, G., Lu, Z., Dong, X., Zhang, W., Zhang, X., Feng, Y., and Wu, X.-M.
\newblock Understanding layer significance in llm alignment, 2025.
\newblock URL \url{https://arxiv.org/abs/2410.17875}.

\bibitem[Shi et~al.(2024)Shi, Zhang, Yao, Li, and Zhao]{shi2024keep}
Shi, L., Zhang, H., Yao, Y., Li, Z., and Zhao, H.
\newblock Keep the cost down: A review on methods to optimize llm's kv-cache
  consumption.
\newblock \emph{arXiv preprint arXiv:2407.18003}, 2024.

\bibitem[Singhania et~al.(2024)Singhania, Singh, He, Feizi, and
  Bhatele]{singhania2024loki}
Singhania, P., Singh, S., He, S., Feizi, S., and Bhatele, A.
\newblock Loki: Low-rank keys for efficient sparse attention.
\newblock \emph{Advances in Neural Information Processing Systems},
  37:\penalty0 16692--16723, 2024.

\bibitem[Song et~al.(2025)Song, Wang, Li, Yin, and
  Liu]{song2025demystifyingrolesllmlayers}
Song, X., Wang, K., Li, P., Yin, L., and Liu, S.
\newblock Demystifying the roles of llm layers in retrieval, knowledge, and
  reasoning, 2025.
\newblock URL \url{https://arxiv.org/abs/2510.02091}.

\bibitem[Sundrani(2025)]{sundrani2025lowrank}
Sundrani, S.
\newblock Low-rank compression of language models via differentiable rank
  selection.
\newblock \emph{arXiv preprint arXiv:2512.13733}, 2025.
\newblock URL \url{https://arxiv.org/abs/2512.13733}.

\bibitem[Taori et~al.(2023)Taori, Gulrajani, Zhang, Dubois, Li, Guestrin,
  Liang, and Hashimoto]{alpaca}
Taori, R., Gulrajani, I., Zhang, T., Dubois, Y., Li, X., Guestrin, C., Liang,
  P., and Hashimoto, T.~B.
\newblock Stanford alpaca: An instruction-following llama model.
\newblock \url{https://github.com/tatsu-lab/stanford_alpaca}, 2023.

\bibitem[Team(2025)]{qwen3technicalreport}
Team, Q.
\newblock Qwen3 technical report, 2025.
\newblock URL \url{https://arxiv.org/abs/2505.09388}.

\bibitem[Touvron et~al.(2023)Touvron, Lavril, Izacard, Martinet, Lachaux,
  Lacroix, Rozière, Goyal, Hambro, Azhar, Rodriguez, Joulin, Grave, and
  Lample]{touvron2023llamaopenefficientfoundation}
Touvron, H., Lavril, T., Izacard, G., Martinet, X., Lachaux, M.-A., Lacroix,
  T., Rozière, B., Goyal, N., Hambro, E., Azhar, F., Rodriguez, A., Joulin,
  A., Grave, E., and Lample, G.
\newblock Llama: Open and efficient foundation language models, 2023.
\newblock URL \url{https://arxiv.org/abs/2302.13971}.

\bibitem[Wang et~al.(2025{\natexlab{a}})Wang, Chen, Lin, Li, and
  Zhang]{wang2025basissharingcrosslayer}
Wang, J., Chen, Y.-G., Lin, I.-C., Li, B., and Zhang, G.~L.
\newblock Basis sharing: Cross-layer parameter sharing for large language model
  compression.
\newblock In \emph{The Thirteenth International Conference on Learning
  Representations}, 2025{\natexlab{a}}.
\newblock URL \url{https://arxiv.org/abs/2410.03765}.

\bibitem[Wang et~al.(2025{\natexlab{b}})Wang, Alam, Wan, Shen, and
  Zhang]{wang2025svd}
Wang, X., Alam, S., Wan, Z., Shen, H., and Zhang, M.
\newblock {SVD}-{LLM} v2: Optimizing singular value truncation for large
  language model compression.
\newblock In Chiruzzo, L., Ritter, A., and Wang, L. (eds.), \emph{Proceedings
  of the 2025 Conference of the Nations of the Americas Chapter of the
  Association for Computational Linguistics: Human Language Technologies
  (Volume 1: Long Papers)}, pp.\  4287--4296, Albuquerque, New Mexico, April
  2025{\natexlab{b}}. Association for Computational Linguistics.
\newblock ISBN 979-8-89176-189-6.
\newblock \doi{10.18653/v1/2025.naacl-long.217}.
\newblock URL \url{https://aclanthology.org/2025.naacl-long.217/}.

\bibitem[Wang et~al.(2025{\natexlab{c}})Wang, Zheng, Wan, and
  Zhang]{wang2025svdllm}
Wang, X., Zheng, Y., Wan, Z., and Zhang, M.
\newblock {SVD}-{LLM}: Truncation-aware singular value decomposition for large
  language model compression.
\newblock In \emph{International Conference on Learning Representations
  (ICLR)}, 2025{\natexlab{c}}.
\newblock URL \url{https://openreview.net/forum?id=LNYIUouhdt}.

\bibitem[Yaras et~al.(2024)Yaras, Wang, Balzano, and Qu]{yaras2024compressible}
Yaras, C., Wang, P., Balzano, L., and Qu, Q.
\newblock Compressible dynamics in deep overparameterized low-rank learning \&
  adaptation.
\newblock \emph{arXiv preprint arXiv:2406.04112}, 2024.

\bibitem[Yu et~al.(2022)Yu, Jeong, Kim, Kim, and Chun]{yu2022orca}
Yu, G.-I., Jeong, J.~S., Kim, G.-W., Kim, S., and Chun, B.-G.
\newblock Orca: A distributed serving system for $\{$Transformer-Based$\}$
  generative models.
\newblock In \emph{16th USENIX Symposium on Operating Systems Design and
  Implementation (OSDI 22)}, pp.\  521--538, 2022.

\bibitem[Yu \& Wu(2023)Yu and Wu]{yu2023compressing}
Yu, H. and Wu, J.
\newblock Compressing transformers: features are low-rank, but weights are not!
\newblock In \emph{Proceedings of the AAAI Conference on Artificial
  Intelligence}, volume~37, pp.\  11007--11015, 2023.

\bibitem[Yuan et~al.(2024)Yuan, Shang, Song, Wu, Yan, and Sun]{Yuan2024ASVD}
Yuan, Z., Shang, Y., Song, Y., Wu, Q., Yan, Y., and Sun, G.
\newblock {ASVD}: Activation-aware singular value decomposition for compressing
  large language models.
\newblock In \emph{The Twelfth International Conference on Learning
  Representations}, 2024.
\newblock URL \url{https://arxiv.org/abs/2312.05821}.

\bibitem[Zellers et~al.(2019)Zellers, Holtzman, Bisk, Farhadi, and
  Choi]{zellers2019hellaswag}
Zellers, R., Holtzman, A., Bisk, Y., Farhadi, A., and Choi, Y.
\newblock Hellaswag: Can a machine really finish your sentence?
\newblock In \emph{Proceedings of the 57th Annual Meeting of the Association
  for Computational Linguistics}, 2019.

\bibitem[Zhang et~al.(2022)Zhang, Roller, Goyal, Artetxe, Chen, Chen, Dewan,
  Diab, Li, Lin, Mihaylov, Ott, Shleifer, Shuster, Simig, Koura, Sridhar, Wang,
  and Zettlemoyer]{zhang2022optopenpretrainedtransformer}
Zhang, S., Roller, S., Goyal, N., Artetxe, M., Chen, M., Chen, S., Dewan, C.,
  Diab, M., Li, X., Lin, X.~V., Mihaylov, T., Ott, M., Shleifer, S., Shuster,
  K., Simig, D., Koura, P.~S., Sridhar, A., Wang, T., and Zettlemoyer, L.
\newblock Opt: Open pre-trained transformer language models, 2022.
\newblock URL \url{https://arxiv.org/abs/2205.01068}.

\bibitem[Zhou et~al.(2024)Zhou, Ning, Hong, Fu, Xu, Li, Lou, Wang, Yuan, Li,
  et~al.]{zhou2024survey}
Zhou, Z., Ning, X., Hong, K., Fu, T., Xu, J., Li, S., Lou, Y., Wang, L., Yuan,
  Z., Li, X., et~al.
\newblock A survey on efficient inference for large language models.
\newblock \emph{arXiv preprint arXiv:2404.14294}, 2024.

\end{thebibliography}
\end{document}